\newcommand{\E}{\mathbb{E}}
\newcommand{\Var}{\mathbb{V}{\rm ar}}
\newcommand{\R}{\mathbb{R}}
\newcommand{\leqdel}{\mathop{\leq}_{1-\delta}}
\newtheorem{theorem}{Theorem}
\newtheorem{corollary}{Corollary}
\newtheorem{lemma}{Lemma}
\newtheorem{proposition}{Proposition}
\newtheorem{definition}{Definition}
\title{Novel Change of Measure Inequalities with Applications to PAC-Bayesian Bounds  and Monte Carlo Estimation}
\author[1]{Yuki Ohnishi}
\author[2]{Jean Honorio}
\affil[1]{Department of Statistics, Purdue University}
\affil[2]{Department of Computer Science, Purdue University}
\begin{document}

\maketitle

\begin{abstract}
We introduce several novel change of measure inequalities for two families of divergences: $f$-divergences and $\alpha$-divergences. We show how the variational representation for $f$-divergences leads to novel change of measure inequalities. We also present a multiplicative change of measure inequality for $\alpha$-divergences and a generalized version of Hammersley-Chapman-Robbins inequality. Finally, we present several applications of our change of measure inequalities, including PAC-Bayesian bounds for various classes of losses and non-asymptotic intervals for Monte Carlo estimates.
\end{abstract}

\section{Introduction}
\label{Introduction}
The Probably Approximate Correct (PAC) Bayesian inequality was introduced by \cite{Shawe-Taylor:1997:PAB:267460.267466} and  \cite{McAllester:1999:PMA:307400.307435}. This framework allows us to produce PAC performance bounds (in the sense of a loss function) for Bayesian-flavored estimators \cite{Guedj2019}, and several extensions have been proposed to date (see e.g., \cite{Seeger:2003:PGE:944919.944929, McAllester2003a, McAllester2003b, Y.Seldin, AmiranAmbroladze}). The core of these theoretical results is summarized by a \textit{change of measure inequality}. The change of measure inequality is an expectation inequality involving two probability measures where the expectation with respect to one measure is upper-bounded by the divergence between the two measures and the moments with respect to the other measure. The change of measure inequality also plays a major role in information theory. For instance, \cite{KATSOULAKIS2017513} derived robust uncertainty quantification bounds for statistical estimators of interest with change of measure inequalities.
	Recent research efforts have been put into more generic perspectives to get PAC-Bayes bounds and to get rid of assumptions such as boundedness of the loss function (e.g., \cite{Lever2013, NIPS2013_4903, NIPS2019_9387, NIPS2019_8539, pmlr-v98-grunwald19a}). All PAC-Bayesian bounds contained in these works massively rely on one of the most famous change of measure inequalities, the Donsker-Varadhan representation for the KL-divergence.
	Several change of measure inequalities had been proposed along with PAC-Bayes bounds lately.  \cite{LucBegin} proposed a proof scheme of PAC-Bayesian bounds based on the R\'enyi divergence.  \cite{honorio} proposed an inequality for the $\chi^2$ divergence and derived a PAC-Bayesian bound for linear classification.  \cite{Alquier2018} proposed a novel change of measure inequality and PAC-Bayesian bounds based on the $\alpha$-divergence.
The aforementioned works were proposed for specific purposes. A comprehensive study on change of measure inequalities has not been performed yet. Our work proposes several novel and general change of measure inequalities for two families of divergences: $f$-divergences and $\alpha$-divergences.  It is a well-known fact that the $f$-divergence can be variationally characterized as the maximum of an optimization problem rooted in the convexity of the function $f$. This variational representation has been recently used in various applications of information theory, such as $f$-divergence estimation \cite{Nguyen:2010:EDF:1921943.1921980} and quantification of the bias in adaptive data analysis \cite{Jiao}. Recently, \cite{Ruderman:2012} showed that the variational representation of the $f$-divergence can be tightened when constrained to the space of probability densities. 

Our main contributions are as follows:
\begin{itemize}
	\item By using the variational representation for $f$-divergences, we derive several change of measure inequalities. We perform the analysis for the constrained regime (to the space of probability densities) as well as the unconstrained regime.
	\item We present a multiplicative change of measure inequality for the family of $\alpha$-divergences. This generalizes the previous results \cite{Alquier2018, honorio} for the $\alpha$-divergence, which in turns apply to PAC-Bayes inequalities for types of losses not considered before.
	\item We also generalize prior results for the Hammersley-Chapman-Robbins inequality \cite{lehmann1998} from the particular $\chi^2$ divergence, to the family of $\alpha$-divergences.
	\item We provide new PAC-Bayesian bounds with the $\alpha$-divergence and the $\chi^2$-divergence from our novel change of measure inequalities for bounded, sub-Gaussian, sub-exponential and bounded-variance loss functions. Our results are either novel, or have a tighter complexity term than existing results in the literature, and pertain to important machine learning prediction problems, such as regression, classification and structured prediction.
	\item We provide a new scheme for estimation of non-asymptotic intervals for Monte Carlo estimates. Our results indicate that the empirical mean over a sampling distribution concentrates around an expectation with respect to any arbitrary distribution.
\end{itemize}

\section{Change of Measure Inequalities}
\label{ChangeofMeasureInequalities} 

\begin{table*}[t]
	\centering
	\caption{Summary of the change of measure inequalities. For simplicity, we denote $\E_P[\cdot] \equiv \E_{h \sim P}[\cdot]$ and $\phi \equiv \phi(h)$.}
	\label{tab:1}
	\begin{tabular}{ p{2cm}p{2.5cm}p{7.7cm}p{1.5cm}}
		
		%  \hline
		\textbf{Bound Type}& \textbf{Divergence} & \textbf{Uppper-Bound for Every $Q$ and a Fixed $P$} & \textbf{Reference}\\
		\hhline{====}
		Constrained & KL   & $\E_Q[\phi]\leq  KL(Q\|P)+\log(\E_P[e^{\phi}])$ &   \cite{McAllester:1999:PMA:307400.307435}\\
		Variational & Pearson $\chi^2$   & $\E_Q[\phi] \leq \chi^2(Q\|P)+\E_P[\phi]+\frac{1}{4}
		\Var_P[\phi]$ & Lemma \ref{lemma_cmi_constrainedchi2}  \\

		Representation & Total Variation   &  $\E_Q[\phi] \leq TV(Q\|P)+\E_P[\phi]$ for $\phi\in[0,1]$ & Lemma \ref{lemma_cmi_tv}  \\
		
		\hline
		Unconstrained & KL   &  $\E_Q[\phi]\leq  KL(Q\|P)+(\E_P[e^{\phi}]-1)$ & \cite{McAllester:1999:PMA:307400.307435}  \\

		Variational & Pearson $\chi^2$   & $\E_Q[\phi] \leq \chi^2(Q\|P)+\E_P[\phi]+\frac{1}{4}
		\E_P[\phi^2]$ & Lemma \ref{lemma_cmi_unconstrainchi2}  \\

		Representation & Total Variation  & $\E_Q[\phi] \leq TV(Q\|P)+\E_P[\phi]$ for $\phi\in[0,1]$  &  \\

		&$\alpha$   & $\E_Q[\phi] \leq D_{\alpha}(Q\|P)+\frac{(\alpha-1)^{\frac{\alpha}{\alpha-1}}}{\alpha}\E_P[\phi^{\frac{\alpha}{\alpha-1}}] + \frac{1}{\alpha(\alpha-1)}$ & Lemma \ref{lemma_cmi_unconstrainedalpha} \\

		& Squared Hellinger  & $ \E_Q[\phi]\leq H^2(Q\|P) + \E_P[\frac{\phi}{1-\phi}] $ for $\phi\ < 1$ & Lemma \ref{lemma_cmi_unconstrainedhel} \\ 
		
		& Reverse KL  &$\E_Q[\phi]\leq \overline{KL}(Q\|P) + \E_P[\log(\frac{1}{1-\phi})]$ for $\phi\ < 1$& Lemma \ref{lemma_cmi_unconstrainedrevkl} \\  
		
		&Neyman $\chi^2$  & $\E_Q[\phi]\leq \overline{\chi^2}(Q\|P) + 2 - 2\E_P[\sqrt{1-\phi}]$ for $\phi\ < 1$ & Lemma \ref{lemma_cmi_unconstrainedrevchi2} \\  
		
		\hline
		Multiplicative & Pearson $\chi^2$  & $\E_Q[\phi]\leq \sqrt{(\chi^2(Q\|P)+1)\E_P[\phi^2]}$ & \cite{honorio} \\ 
		
		& $\alpha$  &  $\E_Q[\phi]\leq (\alpha(\alpha-1)D_{\alpha}(Q\|P)+1)^{\frac{1}{\alpha}}(\E_P[|\phi|^{\frac{\alpha}{\alpha-1}}])^{\frac{\alpha-1}{\alpha}}$ & \cite{Alquier2018} \\ 
		
		\hline
		Generalized&Pearson $\chi^2$  &  $ \chi^2(Q\|P) \geq \frac{(\E_Q[\phi] - \E_P[\phi])^2}{\Var_P[\phi]}$& \cite{lehmann1998} \\ 
		
		HCR  &Pseudo $\alpha$  &  $ |\E_Q[\phi]-\E_P[\phi]|\leq \widetilde{\mathcal{D_{\alpha}}}(Q\|P)^{\frac{1}{\alpha}}(\E_P[|\phi-\mu_P|^{\frac{\alpha}{\alpha-1}}])^{\frac{\alpha-1}{\alpha}}$ & Lemma \ref{lemma_generalhcr} \\ 
		
	\end{tabular}
\end{table*}
\begin{table*}[t]
	\caption{Some common $f$-divergences with corresponding generator.}
	\centering
	\label{tab:div}
	\begin{tabular}{p{3cm}p{7cm}p{2.3cm}}
		
		%  \hline
		\textbf{Divergence}& \textbf{Formula with probability measures $P$ and $Q$ defined on a common space $\mathcal{H}$} & \textbf{Corresponding Generator$f(t)$}\\
		\hhline{===}
		KL   & $ KL(Q\|P)=\int_{\mathcal{H}}\log\frac{dQ}{dP}dQ$ &$t\log t-t +1$\\
		
		Reverse KL &$\overline{KL}(Q\|P)=\int_{\mathcal{H}}\log\frac{dP}{dQ}dP$ & $-\log t$\\

		Pearson $\chi^2$   & $ \chi^2(Q\|P)=\int_{\mathcal{H}}(\frac{dQ}{dP}-1)^2dP$  & $(t-1)^2$ \\
		
		Neyman $\chi^2$  & $\overline{\chi^2}(Q\|P)=\int_{\mathcal{H}}(\frac{dP}{dQ}-1)^2dQ$  & $\frac{(1-t)^2}{t}$\\  
		
		Total Variation   &  $TV(Q\|P)=\frac{1}{2}\int_{\mathcal{H}}|\frac{dQ}{dP}-1|dP$ & $\frac{1}{2}|t-1|$ \\
		
		Squared Hellinger  & $H^2(Q\|P) = \int_{\mathcal{H}}(\sqrt{\frac{dQ}{dP}}-1)^2dP$ & $(\sqrt{t}-1)^2$ \\ 
		
		$\alpha$   & $ D_{\alpha}(Q\|P)=\frac{1}{\alpha(\alpha-1)}\int_{\mathcal{H}}(|\frac{dQ}{dP}|^{\alpha}-1)dP$ & $\frac{t^{\alpha}-1}{\alpha(\alpha-1)}$ \\ 
		
		Pseudo $\alpha$   & $ \widetilde{\mathcal{D_{\alpha}}}(Q\|P)=\int_{\mathcal{H}}|\frac{dQ}{dP}-1|^{\alpha}dP$ & $|t-1|^{\alpha}$ \\ 
		
		$\phi_p$ \cite{Alquier2018}   & $ D_{\phi_p -1}(Q\|P)=\int_{\mathcal{H}}(|\frac{dQ}{dP}|^{p}-1)dP$ & $t^{p}-1$ \\ 
	\end{tabular}
\end{table*}

In this section, we formalize the definition of $f$-divergences and present the constrained representation (to the space of probability measures) as well as the unconstrained representation. Then, we provide different change of measure inequalities for several divergences. We also provide multiplicative bounds as well as a generalized Hammersley-Chapman-Robbins bound. Table \ref{tab:1} summarizes our results.
\subsection{Change of Measure Inequality from the Variational Representation of $f$-divergences}
Let $f : (0, +\infty)  \rightarrow \R$ be a convex function. The convex conjugate $f^*$ of $f$ is defined by:
\begin{equation}
\label{eq_conjugate}
f^* (y) = \sup_{x\in \R} (xy - f(x)).
\end{equation}
The definition of $f^{*}$ yields the following Young-Fenchel inequality
$$f(x) \geq xy - f^{*} (y)$$
which holds for any $y$. Using the notation of convex conjugates, the $f$-divergence and its variational representation is defined as follows.
\theoremstyle{definition}
\begin{definition}[$f$-divergence and its variational representation]
	\label{def_fdiv}
	Let $\mathcal{H}$ be any arbitrary domain. Let $P$ and $Q$ denote the probability measures over the Borel $\sigma$-field on $\mathcal{H}$. Additionally, let $f$: $[0, \infty) \rightarrow \mathbb{R}$ be a convex and lower semi-continuous function that satisfies $f(1)= 0$. 
	\begin{equation*}
	D_f(Q\|P):=\E_P\bigg[f\bigg(\frac{dQ}{dP}\bigg)\bigg]
	\end{equation*}
	
\end{definition} 
For simplicity, we denote $\E_P[\cdot] \equiv \E_{h \sim P}[\cdot]$ in the sequel.
Many common divergences, such as the KL-divergence and the Hellinger divergence, are members of the family of $f$-divergences, coinciding with a particular choice of $f(t)$. Table \ref{tab:div} presents the definition of each divergence with the corresponding generator $f(t)$.
It is well known that the $f-$divergence can be characterized as the following variational representation.
\begin{lemma}[Variational representation of $f-$divergence, Lemma 1 in \cite{Nguyen_MJ2010}]
	\label{lemma_var_rep_f}
	Let $\mathcal{H}$, $P$, $Q$ and $f$ be defined as in Definition \ref{def_fdiv}. Let $\phi$: $\mathcal{H}\rightarrow\R$ be a real-valued function. The $f$-divergence from $P$ to $Q$ is characterized as
	\begin{equation*}
	D_f(Q\|P)\geq \sup_{\phi} \E_Q[\phi]-\E_P[f^*(\phi)]
	\end{equation*}
\end{lemma}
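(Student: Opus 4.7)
The plan is to derive the inequality directly from the Young--Fenchel inequality \eqref{eq_conjugate} applied pointwise, and then integrate. Since the supremum is over all real-valued $\phi$, it suffices to establish the bound for an arbitrary fixed $\phi$ and then take the supremum at the end.

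First I would invoke the Young--Fenchel inequality $f(x) \geq xy - f^{*}(y)$ with the pointwise substitution $x = \frac{dQ}{dP}(h)$ and $y = \phi(h)$, which is valid for each $h \in \mathcal{H}$ by convexity of $f$ (and is where the assumption that $f$ is convex and lower semi-continuous, hence equal to its biconjugate, enters). This yields
\begin{equation*}
f\!\left(\frac{dQ}{dP}(h)\right) \;\geq\; \frac{dQ}{dP}(h)\,\phi(h) \;-\; f^{*}(\phi(h)).
\end{equation*}

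Next I would take expectation with respect to $P$ on both sides. The left-hand side becomes $\E_P[f(dQ/dP)] = D_f(Q\|P)$ by Definition \ref{def_fdiv}. For the first term on the right, the standard change-of-measure identity gives
\begin{equation*}
\E_P\!\left[\frac{dQ}{dP}\,\phi\right] \;=\; \int_{\mathcal{H}} \phi(h)\,\frac{dQ}{dP}(h)\,dP(h) \;=\; \int_{\mathcal{H}} \phi(h)\,dQ(h) \;=\; \E_Q[\phi],
\end{equation*}
so rearranging yields $D_f(Q\|P) \geq \E_Q[\phi] - \E_P[f^{*}(\phi)]$. Since $\phi$ was arbitrary, taking the supremum over $\phi$ gives the claim.

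The main obstacle is not really analytical but bookkeeping: one needs to ensure the pointwise inequality can be integrated, i.e.\ that the integrals $\E_P[f(dQ/dP)]$, $\E_Q[\phi]$, and $\E_P[f^{*}(\phi)]$ are well defined (allowing the value $+\infty$ on the right-hand side of the Young--Fenchel step). Standard conventions in the $f$-divergence literature handle this: one restricts the supremum to functions $\phi$ for which $\E_P[f^{*}(\phi)] < \infty$ and $\E_Q[\phi]$ exists, and if $Q$ is not absolutely continuous with respect to $P$ the inequality is trivial since $D_f(Q\|P) = +\infty$. Beyond this technicality, the proof is a one-line consequence of Young--Fenchel followed by a change of measure.
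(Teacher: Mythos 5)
Your proof is correct: the paper does not prove this lemma itself (it is cited from the reference), and your argument --- the Young--Fenchel inequality applied pointwise at $x = \frac{dQ}{dP}(h)$, $y=\phi(h)$, followed by integration against $P$ and the change-of-measure identity $\E_P\big[\tfrac{dQ}{dP}\,\phi\big]=\E_Q[\phi]$ --- is exactly the standard derivation of the cited result. One small remark: for this inequality direction you only need $f^*(y)\ge xy - f(x)$, which follows from the definition of the convex conjugate alone; lower semi-continuity and biconjugacy $f=f^{**}$ are needed only for the reverse direction (tightness of the supremum), so that is not actually where those hypotheses enter.
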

\cite{Ruderman:2012} shows that this variational representation for $f$-divergences can be tightened. 
\begin{theorem} [Change of measure inequality from the constrained variational representation for $f$-divergences \cite{Ruderman:2012}] 
	\label{theorem_cmi_fdiv}
	Let $\mathcal{H}$, $P$, $Q$ and $f$ be defined as in Definition \ref{def_fdiv}. Let $\phi$: $\mathcal{H}\rightarrow\R$ be a real-valued function. Let $\Delta(\mu):=\{g: \mathcal{H}  \rightarrow \mathbb{R}:g \geq 0, \|g\|_1 = 1\}$ denote the space of probability densities with respect to $\mu$, where the norm is defined as  $\|g\|_1 := \int_{\mathcal{H}} |g| d\mu$, given a measure $\mu$ over $\mathcal{H}$. The general form of the change of measure inequality for f-divergences is given by
	\begin{equation*}
	\begin{split}
	\E_Q[\phi] & \leq D_f(Q\|P) + (\mathbb{I}^{R}_{f, P})^*(\phi)\\
	(\mathbb{I}^{R}_{f, P})^*(\phi) & =\sup_{p\in \Delta(P)}{\E_P[\phi p]-\E_{P}[f(p)]}
	\end{split}
	\end{equation*}
	where p is constrained to be a probability density function. 
\end{theorem}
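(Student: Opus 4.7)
The plan is to prove the inequality by making the ``correct'' choice inside the supremum defining $(\mathbb{I}^{R}_{f,P})^*(\phi)$, namely plugging in the Radon--Nikodym derivative of $Q$ with respect to $P$. Concretely, I would first assume $Q \ll P$ (otherwise $D_f(Q\|P) = +\infty$ and the bound is trivial) and set $q := dQ/dP$. I would then verify that $q$ is admissible in the constrained supremum, i.e. $q \in \Delta(P)$: by definition of a Radon--Nikodym derivative $q \geq 0$ $P$-a.e., and $\int_{\mathcal{H}} q \, dP = \int_{\mathcal{H}} dQ = 1$, so $\|q\|_1 = 1$.

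Next I would rewrite both sides of the target inequality in terms of integrals against $P$. On the left, a standard change-of-variable gives $\E_Q[\phi] = \E_P[\phi \, q]$. On the right, the definition of $f$-divergence in Definition \ref{def_fdiv} gives $D_f(Q\|P) = \E_P[f(q)]$. Then, since $q \in \Delta(P)$, the definition of supremum yields
\begin{equation*}
(\mathbb{I}^{R}_{f,P})^*(\phi) \;=\; \sup_{p \in \Delta(P)} \bigl\{ \E_P[\phi\, p] - \E_P[f(p)] \bigr\} \;\geq\; \E_P[\phi\, q] - \E_P[f(q)].
\end{equation*}
Rearranging and substituting the identities above gives $\E_Q[\phi] \leq D_f(Q\|P) + (\mathbb{I}^{R}_{f,P})^*(\phi)$, which is exactly the claim.

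There is essentially no hard step: the whole argument is just ``test the supremum at $p = dQ/dP$''. The only things to be careful about are (i) the absolute continuity assumption, which is handled by the trivial case, and (ii) the admissibility of $q$ as a probability density, which is where the \emph{constrained} variational representation differs from the unconstrained Fenchel-dual form of Lemma \ref{lemma_var_rep_f}: restricting the supremum to $\Delta(P)$ is legitimate precisely because the optimizer of interest, $dQ/dP$, is itself in $\Delta(P)$, and this restriction is what makes the resulting bound tighter than the unconstrained one obtained through $f^*$.
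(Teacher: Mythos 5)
Your proposal is correct: testing the constrained supremum at $p = dQ/dP$ (i.e., the Fenchel--Young inequality for the functional $p \mapsto \E_P[f(p)]$ restricted to $\Delta(P)$, together with $\E_P[\phi\,q]=\E_Q[\phi]$ and $\E_P[f(q)]=D_f(Q\|P)$) is exactly the standard derivation of this bound. The paper itself gives no proof of Theorem \ref{theorem_cmi_fdiv} --- it imports the result from \cite{Ruderman:2012} --- so there is nothing to diverge from; your handling of the trivial cases (non-absolute continuity, or $D_f(Q\|P)=+\infty$) and of the admissibility of $dQ/dP$ in $\Delta(P)$ covers the only points that need care.
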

The famous Donsker-Varadhan representation for the KL-divergence, which is used in  most PAC-Bayesian bounds, can be actually derived from this tighter representation by setting $f(t)=t\log(t)$. However, it is not always easy to find a closed-form solution for Theorem \ref{theorem_cmi_fdiv}, as it requires to resort to variational calculus, and in some cases, there is no closed-form solution. In such a case, we can use the following corollary to obtain looser bounds, but only requires to find a convex conjugate.
\begin{corollary}[Change of measure inequality from the unconstrained variational representation for $f$-divergences]
	\label{cor_cmi_unconstrainedfdiv}
	Let $P$, $Q$, $f$ and $\phi$ be defined as in Theorem \ref{theorem_cmi_fdiv}. By Definition \ref{def_fdiv}, we have
	\begin{equation*}
	\begin{split}
	\forall Q \text{ on } \mathcal{H}:\  \E_Q[\phi]\leq D_f(Q\|P) + \E_P[f^*(\phi)]
	\end{split}
	\end{equation*}
\end{corollary}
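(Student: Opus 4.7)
The plan is to derive this unconstrained bound by applying the Young-Fenchel inequality pointwise to $dQ/dP$ and $\phi$, then integrating with respect to $P$. This is the natural looser counterpart of Theorem \ref{theorem_cmi_fdiv}: instead of taking a supremum over densities (which would require solving a variational problem), we invoke the generic pointwise Fenchel inequality, which needs only the definition of $f^*$.

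Concretely, I would first assume $Q \ll P$, since otherwise $D_f(Q\|P) = +\infty$ and the bound is vacuous. Then the Radon-Nikodym derivative $dQ/dP$ exists, and the Young-Fenchel inequality $f(x) \geq xy - f^*(y)$ stated after equation \eqref{eq_conjugate} applied pointwise with $x = \tfrac{dQ}{dP}(h)$ and $y = \phi(h)$ gives
\begin{equation*}
f\!\left(\frac{dQ}{dP}(h)\right) \;\geq\; \phi(h)\cdot \frac{dQ}{dP}(h) \;-\; f^{*}(\phi(h))
\end{equation*}
for every $h \in \mathcal{H}$. Taking $\E_P[\cdot]$ on both sides, the left-hand side is $D_f(Q\|P)$ by Definition \ref{def_fdiv}, while the first term on the right-hand side is $\E_Q[\phi]$ by the Radon-Nikodym change of variables. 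Rearranging yields the claim.

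Alternatively, one can read the result directly off Lemma \ref{lemma_var_rep_f}: since $D_f(Q\|P) \geq \sup_\phi (\E_Q[\phi] - \E_P[f^*(\phi)])$ holds over all admissible $\phi$, dropping the supremum and specializing to the $\phi$ of interest gives exactly $\E_Q[\phi] \leq D_f(Q\|P) + \E_P[f^*(\phi)]$.

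There is no real obstacle. The only technical points to check are measurability of $f^{*}\circ \phi$ and the applicability of Fubini/Tonelli when moving the pointwise inequality under $\E_P$; both follow from standard regularity assumptions on $f$ (convex, lower semi-continuous) and on $\phi$ (measurable). The proof is essentially one line once Young-Fenchel is in hand, which is why the result is stated as a corollary rather than a theorem.
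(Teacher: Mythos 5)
Your proposal is correct, and your ``alternative'' reading is exactly the paper's proof: the paper simply removes the supremum in Lemma \ref{lemma_var_rep_f} and rearranges. Your primary route (pointwise Young--Fenchel applied to $dQ/dP$ and $\phi$, then integrating under $P$) is just the standard proof of Lemma \ref{lemma_var_rep_f} unpacked inline, so it is a self-contained but not genuinely different argument; the only substantive point you add, correctly, is the need for $Q \ll P$ so that the Radon--Nikodym derivative exists (the bound being vacuous otherwise).
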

Detailed proofs can be found in Appendix \ref{appendix_sec2}. By choosing a right function $f$ and deriving the constrained maximization term $(\mathbb{I}^{R}_{f, P})^*(\phi)$ with the help of variational calculus, we can create an upper-bound based on the corresponding divergence $D_f(Q\|P)$. Next, we discuss the case of the $\chi^2$ divergence.
\begin{lemma}[Change of measure inequality from the constrained representation of the $\chi^2$-divergence] 
	\label{lemma_cmi_constrainedchi2}
	Let $P$, $Q$ and $\phi$ be defined as in Theorem \ref{theorem_cmi_fdiv}, we have
	\begin{equation*}
	\forall Q \text{ on } \mathcal{H}:\ \E_Q[\phi] \leq \chi^2(Q\|P)+\E_P[\phi]+\frac{1}{4}
	\Var_P[\phi]
	\end{equation*}
\end{lemma}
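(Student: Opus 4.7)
The plan is to invoke Theorem \ref{theorem_cmi_fdiv} with the generator $f(t)=(t-1)^2$ corresponding to the Pearson $\chi^2$ divergence, and then to evaluate (or upper bound) the constrained conjugate $(\mathbb{I}^{R}_{f,P})^*(\phi)=\sup_{p\in\Delta(P)} \E_P[\phi p]-\E_P[(p-1)^2]$ in closed form. Once that supremum is shown to equal $\E_P[\phi]+\tfrac{1}{4}\Var_P[\phi]$, the lemma follows immediately.

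To evaluate the supremum, I will set up a Lagrangian for the equality constraint $\E_P[p]=1$, writing
\begin{equation*}
\mathcal{L}(p,\lambda)=\E_P[\phi p]-\E_P[(p-1)^2]-\lambda\bigl(\E_P[p]-1\bigr).
\end{equation*}
Pointwise stationarity in $p(h)$ gives $\phi(h)-2(p(h)-1)-\lambda=0$, i.e., $p(h)=1+\tfrac{1}{2}(\phi(h)-\lambda)$. Plugging this into the normalization condition $\E_P[p]=1$ yields $\lambda=\E_P[\phi]$, so that the candidate optimizer is $p^{\star}(h)=1+\tfrac{1}{2}(\phi(h)-\E_P[\phi])$. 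Substituting $p^{\star}$ back into the objective, a short calculation gives $\E_P[\phi p^\star]=\E_P[\phi]+\tfrac{1}{2}\Var_P[\phi]$ and $\E_P[(p^\star-1)^2]=\tfrac{1}{4}\Var_P[\phi]$, hence the stationary value equals $\E_P[\phi]+\tfrac{1}{4}\Var_P[\phi]$.

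The one delicate point is the non-negativity constraint $p\geq 0$ implicit in $\Delta(P)$: the candidate $p^{\star}$ need not be non-negative if $\phi$ is very negative on some region. My workaround is simply to note that dropping the non-negativity constraint can only enlarge the supremum, so the unconstrained stationary value serves as a valid upper bound on $(\mathbb{I}^{R}_{f,P})^*(\phi)$. Since Theorem \ref{theorem_cmi_fdiv} gives $\E_Q[\phi]\leq \chi^2(Q\|P)+(\mathbb{I}^{R}_{f,P})^*(\phi)$, chaining with this bound yields the claimed inequality. Concavity of the objective in $p$ (the quadratic $-\E_P[(p-1)^2]$ dominates, and the linear piece is affine) ensures the critical point is indeed the global maximum over the relaxed feasible set, so no second-order check is needed beyond this observation. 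The main obstacle, then, is really just handling the positivity constraint cleanly, which is resolved by this relaxation argument.
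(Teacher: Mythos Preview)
Your proposal is correct and follows essentially the same route as the paper: invoke Theorem~\ref{theorem_cmi_fdiv} with $f(t)=(t-1)^2$, set up a Lagrangian for the normalization constraint, solve the pointwise stationarity condition to obtain $p^{\star}=1+\tfrac{1}{2}(\phi-\E_P[\phi])$, and substitute back to get $\E_P[\phi]+\tfrac{1}{4}\Var_P[\phi]$. Your treatment is in fact slightly more careful than the paper's, since you explicitly address the non-negativity constraint in $\Delta(P)$ by observing that relaxing it only enlarges the supremum, whereas the paper simply plugs the stationary point in without commenting on feasibility.
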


The bound in Lemma \ref{lemma_cmi_constrainedchi2} is slightly tighter than the one without the constraint. The change of measure inequality without the constraint is given as follows.
\begin{lemma}[Change of measure inequality from the unconstrained representation of the Pearson $\chi^2$-divergence]
	\label{lemma_cmi_unconstrainchi2}
	Let $P$, $Q$ and $\phi$ be defined as in Theorem \ref{theorem_cmi_fdiv}, we have
	\begin{equation*}
	\forall Q \text{ on } \mathcal{H}:\  \E_Q[\phi] \leq \chi^2(Q\|P)+\E_P[\phi]+\frac{1}{4}\E_P[\phi^2]
	\end{equation*}
\end{lemma}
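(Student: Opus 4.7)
The plan is to instantiate Corollary \ref{cor_cmi_unconstrainedfdiv} with the generator $f(t) = (t-1)^2$, which (by Table \ref{tab:div}) yields the Pearson $\chi^2$-divergence $D_f(Q\|P) = \chi^2(Q\|P)$. Given the corollary, the only work required is to compute the convex conjugate $f^*$ of $f(t) = (t-1)^2$ and then to take its expectation under $P$ applied to $\phi$.

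First I would write out the conjugate according to the definition \eqref{eq_conjugate}:
\begin{equation*}
f^*(y) = \sup_{x \in \R}\left(xy - (x-1)^2\right).
\end{equation*}
Since the inner function is a concave quadratic in $x$, the supremum is attained where the derivative vanishes, i.e.\ at $x^* = 1 + y/2$. Substituting back gives $f^*(y) = y + y^2/4$, a standard computation that takes only a couple of lines.

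Next I would invoke Corollary \ref{cor_cmi_unconstrainedfdiv} directly:
\begin{equation*}
\E_Q[\phi] \leq D_f(Q\|P) + \E_P[f^*(\phi)] = \chi^2(Q\|P) + \E_P\!\left[\phi + \tfrac{1}{4}\phi^2\right],
\end{equation*}
and finish by linearity of expectation to obtain the bound as stated in the lemma.

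There is no serious obstacle here: once Corollary \ref{cor_cmi_unconstrainedfdiv} is available, the proof reduces to the elementary Legendre transform of a quadratic. The only minor care is to confirm that $f(t) = (t-1)^2$ is a valid generator in the sense of Definition \ref{def_fdiv} (convex, lower semi-continuous, and $f(1)=0$), all of which are immediate, so that the corollary applies and yields the Pearson $\chi^2$-divergence on the right-hand side.
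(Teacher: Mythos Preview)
Your proposal is correct and follows essentially the same argument as the paper: the paper also sets $f(x)=(x-1)^2$, computes the convex conjugate $f^*(y)=y+\tfrac{y^2}{4}$ by differentiating $g(x,y)=xy-(x-1)^2$ and solving $y-2(x-1)=0$, and then plugs $f$ and $f^*$ into Corollary~\ref{cor_cmi_unconstrainedfdiv}.
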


As might be apparent, the bound in Lemma \ref{lemma_cmi_constrainedchi2} is tighter than the one in Lemma \ref{lemma_cmi_unconstrainchi2} by $(\E_P[\phi])^2$ because $\Var_P[\phi] \leq \E_P[\phi^2]$. Next, we discuss the case of the total variation divergence.
\begin{lemma}[Change of measure inequality from the constrained representation of the total variation divergence]
	\label{lemma_cmi_tv}
	Let $\phi$: $\mathcal{H}\rightarrow [0,1]$ be a real-valued function.
	Let $P$ and $Q$  be defined as in Theorem \ref{theorem_cmi_fdiv}, we have
	\begin{equation*}
	\begin{split}
	\forall Q \text{ on } \mathcal{H}:\  \E_Q[\phi] \leq TV(Q\|P)+\E_P[\phi]
	\end{split}
	\end{equation*}
\end{lemma}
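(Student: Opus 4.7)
The plan is to specialize Theorem \ref{theorem_cmi_fdiv} to the generator $f(t)=\tfrac{1}{2}|t-1|$, which by Table \ref{tab:div} produces $D_f(Q\|P)=TV(Q\|P)$. With this choice, all that remains is to show that the constrained conjugate
$$(\mathbb{I}^{R}_{f,P})^*(\phi) \;=\; \sup_{p\in\Delta(P)} \Big\{\E_P[\phi p] - \tfrac{1}{2}\E_P[|p-1|]\Big\}$$
equals $\E_P[\phi]$ whenever $\phi:\mathcal{H}\to[0,1]$; plugging this into Theorem \ref{theorem_cmi_fdiv} yields the claim immediately.

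To evaluate that supremum I would first rewrite the objective by adding and subtracting $\phi$ inside the expectation:
$$\E_P[\phi p] - \tfrac{1}{2}\E_P[|p-1|] \;=\; \E_P[\phi] + \E_P[\phi(p-1)] - \tfrac{1}{2}\E_P[|p-1|].$$
The feasibility constraint $p\in\Delta(P)$ gives both $p\geq 0$ and $\E_P[p-1]=0$, so the positive and negative parts of $p-1$ have equal $P$-expectation and $|p-1|=(p-1)_++(p-1)_-$. Splitting $\E_P[\phi(p-1)]$ into these two pieces and using $\phi\leq 1$ on the positive part together with $\phi\geq 0$ on the negative part gives
$$\E_P[\phi(p-1)] \;\leq\; \E_P[(p-1)_+] \;=\; \tfrac{1}{2}\E_P[|p-1|].$$
Hence $(\mathbb{I}^{R}_{f,P})^*(\phi)\leq \E_P[\phi]$, and this upper bound is attained by the feasible choice $p\equiv 1$, which makes it an equality.

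The main (mild) obstacle is the pointwise bounding of $\phi(p-1)$: it requires both endpoints of the range $[0,1]$ of $\phi$ and the mean-zero property of $p-1$ under $P$. Without $\phi\leq 1$ the positive part is uncontrolled and the supremum diverges; without $\phi\geq 0$ the negative part contributes positively and spoils the cancellation. Once this step is in hand, the rest is a direct application of Theorem \ref{theorem_cmi_fdiv}, analogous to how Lemma \ref{lemma_cmi_constrainedchi2} is obtained by computing the constrained conjugate for $f(t)=(t-1)^2$.
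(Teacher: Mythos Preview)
Your proof is correct. Both you and the paper invoke Theorem \ref{theorem_cmi_fdiv} with $f(t)=\tfrac{1}{2}|t-1|$ and then show that the constrained conjugate $(\mathbb{I}^{R}_{f,P})^*(\phi)$ equals $\E_P[\phi]$, but you reach that evaluation by a different route. The paper sets up a Lagrangian with a multiplier for the constraint $\E_P[p]=1$, splits into the cases $p\geq 1$ and $0<p<1$, argues that for $|\phi|\leq \tfrac{1}{2}$ the maximizer is $p\equiv 1$, and only afterwards shifts by $\tfrac{1}{2}$ to reach the range $[0,1]$. You instead work directly with $\phi\in[0,1]$ and decompose $p-1$ into its positive and negative parts, using $\E_P[p-1]=0$ to equate their $P$-masses; this lets the two endpoints $\phi\leq 1$ and $\phi\geq 0$ act on the two parts in one stroke and gives $\E_P[\phi(p-1)]\leq \tfrac{1}{2}\E_P[|p-1|]$ immediately. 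Your argument is more elementary and self-contained (no Lagrangian, no range shift), while the paper's variational-calculus treatment keeps the presentation parallel to the proof of Lemma \ref{lemma_cmi_constrainedchi2}.
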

Interestingly, we can obtain the same bound on the total variation divergence even if we use the unconstrained variational representation. Next, we state our result for $\alpha$-divergences.
\begin{lemma} [Change of measure inequality from the unconstrained representation of the $\alpha$-divergence]
	\label{lemma_cmi_unconstrainedalpha}
	Let $P$, $Q$ and $\phi$ be defined as in Theorem \ref{theorem_cmi_fdiv}. For $\alpha>1$, we have
	\begin{equation*}
	\begin{split}
	\forall Q \text{ on } \mathcal{H}:\  \E_Q[\phi] \leq D_{\alpha}(Q\|P)+\frac{(\alpha-1)^{\frac{\alpha}{\alpha-1}}}{\alpha}\E_P[\phi^{\frac{\alpha}{\alpha-1}}] + \frac{1}{\alpha(\alpha-1)}
	\end{split}
	\end{equation*}
\end{lemma}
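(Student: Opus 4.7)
The plan is to derive this bound as an instance of Corollary \ref{cor_cmi_unconstrainedfdiv}. Since the $\alpha$-divergence is the $f$-divergence generated by $f(t)=\frac{t^{\alpha}-1}{\alpha(\alpha-1)}$ on $[0,\infty)$ (Table \ref{tab:div}), Corollary \ref{cor_cmi_unconstrainedfdiv} already gives
$$\E_Q[\phi] \leq D_\alpha(Q\|P) + \E_P[f^*(\phi)],$$
so the entire task reduces to computing (or upper-bounding) the convex conjugate $f^*$ of this particular $f$ and showing that
$$f^*(y) \;\leq\; \frac{(\alpha-1)^{\alpha/(\alpha-1)}}{\alpha}\, y^{\alpha/(\alpha-1)} + \frac{1}{\alpha(\alpha-1)}.$$

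First I would set up the unconstrained maximization $f^*(y)=\sup_{x\geq 0}\bigl(xy - \tfrac{x^\alpha-1}{\alpha(\alpha-1)}\bigr)$, noting that $f$ is taken to be $+\infty$ on $(-\infty,0)$ so the domain of the sup is $[0,\infty)$. For $y\geq 0$, differentiating the inner expression in $x$ and setting the derivative to zero yields the critical point $x^{*}=((\alpha-1)y)^{1/(\alpha-1)}$; the second-order condition holds since $\alpha>1$ makes $x\mapsto x^{\alpha}$ convex, so $x^{*}$ is indeed the maximizer. Substituting $x^{*}$ back and simplifying the exponents via the identity $\tfrac{\alpha}{\alpha-1}-1=\tfrac{1}{\alpha-1}$, I expect the two $y^{\alpha/(\alpha-1)}$ terms to combine with the prefactor $(1-1/\alpha)=(\alpha-1)/\alpha$, producing exactly $\tfrac{(\alpha-1)^{\alpha/(\alpha-1)}}{\alpha}\,y^{\alpha/(\alpha-1)}$, plus the residual $\tfrac{1}{\alpha(\alpha-1)}$ coming from $-f(0)=\tfrac{1}{\alpha(\alpha-1)}$.

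For $y<0$, the objective $xy-f(x)$ is decreasing in $x$ on $[0,\infty)$, so the supremum is attained at $x=0$, giving $f^*(y)=\tfrac{1}{\alpha(\alpha-1)}$, which is still majorized by the stated expression whenever $y^{\alpha/(\alpha-1)}$ is interpreted nonnegatively (e.g.\ for nonnegative $\phi$, which is the regime intended by the bound). Combining the two cases and plugging the resulting pointwise bound for $f^*(\phi(h))$ into Corollary \ref{cor_cmi_unconstrainedfdiv} yields the lemma.

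The main obstacle is bookkeeping of the exponents: the expression $\tfrac{(\alpha-1)^{\alpha/(\alpha-1)}}{\alpha}$ arises only after correctly simplifying $(\alpha-1)^{1/(\alpha-1)}\cdot(1-1/\alpha)$, and it is easy to drop a factor of $(\alpha-1)$ when collecting terms. Everything else is a direct application of the unconstrained variational representation already established in Corollary \ref{cor_cmi_unconstrainedfdiv}.
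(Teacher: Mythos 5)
Your proposal is correct and follows exactly the route the paper takes: identify $f(t)=\frac{t^{\alpha}-1}{\alpha(\alpha-1)}$ as the generator of the $\alpha$-divergence, compute its convex conjugate by first-order conditions to get $f^*(y)=\frac{(\alpha-1)^{\alpha/(\alpha-1)}}{\alpha}y^{\alpha/(\alpha-1)}+\frac{1}{\alpha(\alpha-1)}$, and plug into Corollary \ref{cor_cmi_unconstrainedfdiv}. Your computation of the critical point and the exponent bookkeeping check out, and your explicit handling of the $y<0$ case and the nonnegativity caveat on $\phi$ is actually more careful than the paper's one-line proof.
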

We can obtain the bounds based on the squared Hellinger divergence $H^2(Q\|P)$, the reverse KL-divergence $\overline{KL}(Q\|P)$ and  the Neyman $\chi^2$-divergence  $\overline{\chi^2}(Q\|P)$ in a similar fashion. 
\begin{lemma}[Change of measure inequality from the unconstrained representation of the squared Hellinger divergence]
	\label{lemma_cmi_unconstrainedhel}
	Let $\phi$: $\mathcal{H}\rightarrow (-\infty,1)$ be a real-valued function.
	Let $P$ and $Q$  be defined as in Theorem \ref{theorem_cmi_fdiv}, we have
	\begin{equation*}
	\begin{split}
	\forall Q \text{ on } \mathcal{H}:\  \E_Q[\phi]\leq H^2(Q\|P) + \E_P\bigg[\frac{\phi}{1-\phi}\bigg] 
	\end{split}
	\end{equation*}
\end{lemma}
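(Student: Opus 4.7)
The plan is to apply Corollary \ref{cor_cmi_unconstrainedfdiv} with the squared Hellinger generator $f(t) = (\sqrt{t}-1)^2$ from Table \ref{tab:div}, and the only real work is to compute its convex conjugate $f^\ast$ and identify the domain on which the resulting bound is meaningful.

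First I would expand $f(t) = t - 2\sqrt{t} + 1$ for $t \geq 0$ and form the Young-Fenchel objective $xy - f(x) = xy - x + 2\sqrt{x} - 1$. Differentiating in $x$ gives the first-order condition $y - 1 + 1/\sqrt{x} = 0$, i.e.\ $\sqrt{x} = 1/(1-y)$. This has a valid nonnegative solution precisely when $y < 1$, which is exactly the hypothesis $\phi(h) < 1$ imposed in the lemma. Plugging $x = 1/(1-y)^2$ back into $xy - f(x)$ and simplifying algebraically should collapse the expression to
\begin{equation*}
f^\ast(y) \;=\; \frac{y}{1-y}.
\end{equation*}

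Next I would invoke Corollary \ref{cor_cmi_unconstrainedfdiv}, which states $\E_Q[\phi] \leq D_f(Q\|P) + \E_P[f^\ast(\phi)]$ for any convex $f$ satisfying Definition \ref{def_fdiv}. Recognizing $D_f(Q\|P) = H^2(Q\|P)$ for this choice of $f$, and substituting the conjugate computed above, yields the claim
\begin{equation*}
\E_Q[\phi] \;\leq\; H^2(Q\|P) + \E_P\!\left[\frac{\phi}{1-\phi}\right].
\end{equation*}

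The only subtlety, and the step that warrants care, is the convex-conjugate calculation together with the verification that the supremum in \eqref{eq_conjugate} is indeed attained on $[0,\infty)$ under the assumption $y<1$; for $y\geq 1$ the objective $xy - f(x)$ grows without bound as $x\to\infty$, which is precisely why the lemma restricts $\phi$ to take values strictly less than $1$. No variational-calculus argument over the probability simplex is needed here, since we use the looser unconstrained form of Corollary \ref{cor_cmi_unconstrainedfdiv} rather than the tighter Theorem \ref{theorem_cmi_fdiv}.
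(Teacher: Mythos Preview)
Your proposal is correct and follows essentially the same route as the paper: compute the convex conjugate of $f(x)=(\sqrt{x}-1)^2$ via the first-order condition $y-1+1/\sqrt{x}=0$, obtain $f^\ast(y)=y/(1-y)$ valid for $y<1$, and then invoke Corollary~\ref{cor_cmi_unconstrainedfdiv}. Your discussion of why the restriction $\phi<1$ is necessary (the objective being unbounded for $y\geq 1$) is slightly more explicit than the paper's, but the argument is the same.
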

Similarly, we obtain the following bound for the reverse-KL divergence.
\begin{lemma}[Change of measure inequality from the unconstrained representation of the reverse KL-divergence]
	\label{lemma_cmi_unconstrainedrevkl}
	Let $\phi$: $\mathcal{H}\rightarrow (-\infty,1)$ be a real-valued function.
	Let $P$ and $Q$  be defined as in Theorem \ref{theorem_cmi_fdiv}, we have
	\begin{equation*}
	\begin{split}
	\forall Q \text{ on } \mathcal{H}:\  \E_Q[\phi]\leq \overline{KL}(Q\|P) + \E_P\bigg[\log\bigg(\frac{1}{1-\phi}\bigg)\bigg]
	\end{split}
	\end{equation*}
\end{lemma}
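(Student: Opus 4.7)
My plan is to deduce this bound directly from Corollary \ref{cor_cmi_unconstrainedfdiv} applied to the reverse KL-divergence, whose $f$-divergence generator is $f(t) = -\log t$ (see Table \ref{tab:div}). The key obstacle is a domain issue: the convex conjugate of $-\log t$ is only finite on negative arguments, whereas the statement allows $\phi$ to range over $(-\infty, 1)$, so I need a shift trick rather than a naive application of the corollary to $\phi$ itself.

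Concretely, I first compute $f^*(y) = \sup_{x>0}(xy + \log x)$. Differentiating shows the supremum is attained at $x = -1/y$, which is meaningful only when $y < 0$; for $y \geq 0$ the supremum is $+\infty$. Substituting back gives
\begin{equation*}
f^*(y) = -1 - \log(-y), \qquad y < 0.
\end{equation*}
Thus Corollary \ref{cor_cmi_unconstrainedfdiv} yields, for any measurable $\psi: \mathcal{H} \to (-\infty, 0)$,
\begin{equation*}
\E_Q[\psi] \leq \overline{KL}(Q\|P) + \E_P\bigl[-1 - \log(-\psi)\bigr].
\end{equation*}

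To match the statement, I then apply this inequality with the shifted function $\psi = \phi - 1$; since $\phi < 1$, we indeed have $\psi < 0$, so $-\psi = 1 - \phi > 0$ and $\log(-\psi) = \log(1-\phi)$. Rearranging,
\begin{equation*}
\E_Q[\phi] - 1 \leq \overline{KL}(Q\|P) + \E_P\bigl[-1 - \log(1 - \phi)\bigr] = \overline{KL}(Q\|P) - 1 + \E_P\!\left[\log\!\tfrac{1}{1-\phi}\right],
\end{equation*}
and cancelling the $-1$ on both sides produces the advertised bound. The main subtlety, as noted, is not a calculation but the observation that the effective domain of $f^*$ forces the shift $\phi \mapsto \phi - 1$; once this is in place, the derivation is mechanical from the Young-Fenchel inequality underlying Corollary \ref{cor_cmi_unconstrainedfdiv}.
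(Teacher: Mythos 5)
Your proposal is correct and follows essentially the same route as the paper's own proof: compute the conjugate $f^*(y) = -1 - \log(-y)$ of $f(t) = -\log t$, apply Corollary \ref{cor_cmi_unconstrainedfdiv} to a function $\psi < 0$, and then substitute $\psi = \phi - 1$ to extend the range to $\phi < 1$ and cancel the constants. Your writeup is in fact slightly more careful than the paper's, since you make explicit why the effective domain of $f^*$ forces the shift.
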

Finally, we prove our result for the Neyman $\chi^2$ divergence based on a similar approach. 
\begin{lemma} [Change of measure inequality from the unconstrained representation of the Neyman $\chi^2$-divergence]
	\label{lemma_cmi_unconstrainedrevchi2}
	Let $\phi$: $\mathcal{H}\rightarrow (-\infty,1)$ be a real-valued function.
	Let $P$ and $Q$  be defined as in Theorem \ref{theorem_cmi_fdiv}, we have
	\begin{equation*}
	\begin{split}
	\forall Q \text{ on } \mathcal{H}:\  \E_Q[\phi]\leq \overline{\chi^2}(Q\|P) + 2 - 2\E_P[\sqrt{1-\phi}]
	\end{split}
	\end{equation*}
\end{lemma}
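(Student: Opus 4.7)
The plan is to invoke the unconstrained variational representation from Corollary \ref{cor_cmi_unconstrainedfdiv} with the generator of the Neyman $\chi^2$-divergence, namely $f(t)=\frac{(1-t)^2}{t}$ on $(0,\infty)$, as listed in Table \ref{tab:div}. Once we identify the convex conjugate $f^{*}$, the result follows by direct substitution: $\E_Q[\phi]\leq \overline{\chi^2}(Q\|P)+\E_P[f^{*}(\phi)]$, and we only need to verify that $\E_P[f^{*}(\phi)]=2-2\E_P[\sqrt{1-\phi}]$.

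The main computation is therefore the evaluation of $f^{*}(y)=\sup_{x>0}\bigl(xy-\frac{(1-x)^2}{x}\bigr)$. First I would rewrite the inner expression as $x(y-1)-\frac{1}{x}+2$, differentiate with respect to $x$, and solve $(y-1)+\frac{1}{x^{2}}=0$. For $y<1$ this yields the stationary point $x^{*}=\frac{1}{\sqrt{1-y}}>0$, and a quick second-derivative check (or the convexity of $f$) confirms that this is indeed the maximizer over $(0,\infty)$. Substituting $x^{*}$ back produces
\begin{equation*}
f^{*}(y)=\frac{y-1}{\sqrt{1-y}}-\sqrt{1-y}+2 = -\sqrt{1-y}-\sqrt{1-y}+2 = 2-2\sqrt{1-y}.
\end{equation*}

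The main obstacle, though it is a minor one, is handling the domain condition. The conjugate is finite only for $y<1$: if $y\geq 1$, the term $x(y-1)-\frac{1}{x}+2$ diverges as $x\to\infty$, so $f^{*}(y)=+\infty$. This is precisely why the hypothesis $\phi(h)<1$ is imposed in the statement; it guarantees $f^{*}(\phi(h))=2-2\sqrt{1-\phi(h)}$ pointwise and that $\E_P[f^{*}(\phi)]$ is well-defined. After this, plugging into Corollary \ref{cor_cmi_unconstrainedfdiv} gives
\begin{equation*}
\E_Q[\phi]\leq \overline{\chi^{2}}(Q\|P)+\E_P\bigl[2-2\sqrt{1-\phi}\,\bigr]=\overline{\chi^{2}}(Q\|P)+2-2\E_P\bigl[\sqrt{1-\phi}\,\bigr],
\end{equation*}
which is exactly the claimed bound. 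The overall argument closely mirrors the derivations in Lemmas \ref{lemma_cmi_unconstrainedhel} and \ref{lemma_cmi_unconstrainedrevkl}, with the only novelty being the conjugate computation specific to $f(t)=(1-t)^2/t$.
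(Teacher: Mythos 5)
Your proof is correct and follows essentially the same route as the paper: identify the Neyman $\chi^2$ generator $f(t)=\frac{(1-t)^2}{t}$, compute its convex conjugate $f^*(y)=2-2\sqrt{1-y}$ for $y<1$, and substitute into Corollary \ref{cor_cmi_unconstrainedfdiv}. You actually spell out the conjugate computation and the role of the domain restriction $\phi<1$ that the paper's proof only asserts (and you correctly use the convex generator, whereas the paper's proof text has a stray minus sign in $f$).
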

\subsection{Multiplicative Change of Measure Inequality for $\alpha$-divergences}
First, we state a known result for the $\chi^2$ divergence.
\begin{lemma}[Multiplicative change of measure inequality for the $\chi^2$-divergence \cite{honorio}] 
	\label{honorio_bound}
	Let $P$, $Q$ and $\phi$ be defined as in Theorem \ref{theorem_cmi_fdiv}, we have
	\begin{equation*}
	\begin{split}
	\forall Q \text{ on } \mathcal{H}:\  \E_Q[\phi]\leq \sqrt{(\chi^2(Q\|P)+1)\E_P[\phi^2]}
	\end{split}
	\end{equation*}
\end{lemma}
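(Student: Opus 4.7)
The plan is to prove the inequality by a direct application of the Cauchy--Schwarz inequality after rewriting the expectation under $Q$ as an expectation under $P$ against the Radon--Nikodym derivative $dQ/dP$. This is the same template that underlies most multiplicative change-of-measure estimates, so the proof will be short once the pieces are in place.

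First, I would write
\begin{equation*}
\E_Q[\phi] \;=\; \int_{\mathcal{H}} \phi \, dQ \;=\; \int_{\mathcal{H}} \phi \,\frac{dQ}{dP}\, dP \;=\; \E_P\!\left[\phi \cdot \frac{dQ}{dP}\right],
\end{equation*}
which implicitly assumes $Q \ll P$ (otherwise $\chi^2(Q\|P) = +\infty$ and the bound is trivial). Then I would apply Cauchy--Schwarz in $L^2(P)$ to the two factors $\phi$ and $dQ/dP$, obtaining
\begin{equation*}
\E_P\!\left[\phi \cdot \frac{dQ}{dP}\right] \;\leq\; \sqrt{\,\E_P[\phi^2]\cdot \E_P\!\left[\Big(\tfrac{dQ}{dP}\Big)^{\!2}\right]\,}.
\end{equation*}

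The remaining step is to identify the second factor with $\chi^2(Q\|P)+1$. Expanding the generator from Table \ref{tab:div},
\begin{equation*}
\chi^2(Q\|P) \;=\; \E_P\!\left[\Big(\tfrac{dQ}{dP}-1\Big)^{\!2}\right] \;=\; \E_P\!\left[\Big(\tfrac{dQ}{dP}\Big)^{\!2}\right] - 2\,\E_P\!\left[\tfrac{dQ}{dP}\right] + 1,
\end{equation*}
and using $\E_P[dQ/dP] = \int dQ = 1$ yields $\E_P[(dQ/dP)^2] = \chi^2(Q\|P) + 1$. Substituting back into the Cauchy--Schwarz bound gives exactly the stated inequality.

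I do not anticipate any real obstacle here: the argument is three lines of standard manipulation, and the only subtlety is the absolute continuity assumption, which is harmless because the inequality is vacuous otherwise. One could alternatively derive this as a corollary of the multiplicative $\alpha$-divergence bound in the table (with $\alpha=2$), but the direct Cauchy--Schwarz proof is cleaner and self-contained, so that is what I would present.
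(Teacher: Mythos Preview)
Your proof is correct and matches the paper's approach: the paper does not give a separate proof of this lemma (it is cited as a known result), but its proof of the generalization in Lemma~\ref{lemma_mulalpha} proceeds by writing $\E_Q[\phi]=\int \phi\,\frac{dQ}{dP}\,dP$ and applying H\"older's inequality, which for $\alpha=2$ is exactly your Cauchy--Schwarz argument together with the identity $\E_P[(dQ/dP)^2]=\chi^2(Q\|P)+1$.
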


First, we note that the $\chi^2$ divergence is an $\alpha$-divergence for $\alpha=2$. Next, we generalize the above bound for any $\alpha$-divergence.
\begin{lemma}[Multiplicative change of measure inequality for the $\alpha$-divergence]
	\label{lemma_mulalpha}
	Let $P$, $Q$ and $\phi$ be defined as in Theorem \ref{theorem_cmi_fdiv}. For any $\alpha>1$, we have
	\begin{equation*}
	\begin{split}
	\forall Q \text{ on } \mathcal{H}:\  \E_Q[\phi]\leq \big(\alpha(\alpha-1)D_{\alpha}(Q\|P)+1\big)^{\frac{1}{\alpha}}\big(\E_P[|\phi|^{\frac{\alpha}{\alpha-1}}]\big)^{\frac{\alpha-1}{\alpha}}
	\end{split}
	\end{equation*}
\end{lemma}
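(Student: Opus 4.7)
The plan is to recognize the right-hand side as exactly what Hölder's inequality produces once we unpack the definition of the $\alpha$-divergence. The key identity is
\begin{equation*}
\alpha(\alpha-1) D_\alpha(Q\|P) + 1 = \int_{\mathcal{H}}\Bigl|\tfrac{dQ}{dP}\Bigr|^{\alpha} dP = \E_P\!\left[\Bigl(\tfrac{dQ}{dP}\Bigr)^{\alpha}\right],
\end{equation*}
which follows immediately from the formula for $D_\alpha$ in Table~\ref{tab:div} (assuming $Q \ll P$, otherwise $D_\alpha(Q\|P) = \infty$ and the bound is vacuous).

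Given this, I would first rewrite $\E_Q[\phi]$ as an expectation under $P$ via the Radon-Nikodym derivative:
\begin{equation*}
\E_Q[\phi] \;=\; \E_P\!\left[\phi\cdot\tfrac{dQ}{dP}\right] \;\leq\; \E_P\!\left[|\phi|\cdot\tfrac{dQ}{dP}\right].
\end{equation*}
Then I would apply Hölder's inequality with the conjugate exponents $p = \alpha$ and $q = \alpha/(\alpha-1)$ (noting $\tfrac{1}{p}+\tfrac{1}{q}=1$ for $\alpha>1$) to the pair of functions $|\phi|$ and $dQ/dP$:
\begin{equation*}
\E_P\!\left[|\phi|\cdot\tfrac{dQ}{dP}\right] \;\leq\; \Bigl(\E_P\bigl[|\phi|^{\alpha/(\alpha-1)}\bigr]\Bigr)^{(\alpha-1)/\alpha}\Bigl(\E_P\bigl[(dQ/dP)^{\alpha}\bigr]\Bigr)^{1/\alpha}.
\end{equation*}
Substituting the identity above for the second factor yields exactly the claimed bound.

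This proof is essentially a one-line application of Hölder, so there is no serious obstacle; the only step requiring a bit of care is verifying the identity $\alpha(\alpha-1)D_\alpha(Q\|P)+1 = \E_P[(dQ/dP)^\alpha]$ from Definition~\ref{def_fdiv} applied to $f(t) = (t^\alpha - 1)/(\alpha(\alpha-1))$, and confirming that the exponents $\alpha$ and $\alpha/(\alpha-1)$ are indeed conjugate for $\alpha>1$. As a sanity check, specializing to $\alpha = 2$ gives $p = q = 2$ and recovers Lemma~\ref{honorio_bound} exactly, since $\chi^2(Q\|P)+1 = \E_P[(dQ/dP)^2]$.
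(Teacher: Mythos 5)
Your proof is correct and follows essentially the same route as the paper: rewrite $\E_Q[\phi]$ as $\E_P[\phi\,\tfrac{dQ}{dP}]$, apply H\"{o}lder's inequality with conjugate exponents $\alpha$ and $\tfrac{\alpha}{\alpha-1}$, and identify $\E_P[(\tfrac{dQ}{dP})^{\alpha}]$ with $\alpha(\alpha-1)D_{\alpha}(Q\|P)+1$. Your explicit intermediate bound through $|\phi|$ is a minor tidying of a step the paper leaves implicit, but the argument is the same.
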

Our bound is stated in the form of $\alpha$-divergence. By choosing $\alpha=2$, we have the same bound as Lemma \ref{honorio_bound} where $\chi^2(Q\|P) = 2D_{\alpha}(Q\|P)$. We will later apply the above $\alpha$-divergence change of measure to obtain PAC-Bayes  inequalities for types of losses not considered before \cite{Alquier2018, honorio}.

\subsection{A Generalized Hammersley-Chapman-Robbins (HCR) Inequality}
The HCR inequality is a famous information theoretic inequality for the $\chi^2$-divergence.
\begin{lemma}[HCR inequality \cite{lehmann1998}] 
	\label{lemma_hcr}
	Let $P$, $Q$ and $\phi$ be defined as in Theorem \ref{theorem_cmi_fdiv}, we have
	\begin{equation*}
	\begin{split}
	\forall Q \text{ on } \mathcal{H}:\  \chi^2(Q\|P) \geq \frac{(E_Q[\phi] - \E_P[\phi])^2}{\Var_P[\phi]}
	\end{split}
	\end{equation*}
\end{lemma}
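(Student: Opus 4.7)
The plan is to apply the Cauchy--Schwarz inequality to a carefully centered factorization of $\E_Q[\phi] - \E_P[\phi]$, exploiting the fact that $dQ/dP - 1$ integrates to zero under $P$. This is the classical route to the HCR inequality and sidesteps any need for variational calculus or convex conjugation.

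First, I would rewrite the difference via the Radon--Nikodym derivative:
\[
\E_Q[\phi] - \E_P[\phi] = \int_{\mathcal{H}} \phi \, dQ - \int_{\mathcal{H}} \phi \, dP = \E_P\!\left[\phi\left(\tfrac{dQ}{dP}-1\right)\right].
\]
Next, I would center $\phi$ by its $P$-mean. Since $\E_P[dQ/dP]=1$, for any constant $c$ (and in particular $c = \E_P[\phi]$) we have $\E_P\!\left[c\left(\tfrac{dQ}{dP}-1\right)\right] = 0$, so the display above can be rewritten as
\[
\E_Q[\phi] - \E_P[\phi] = \E_P\!\left[(\phi - \E_P[\phi])\left(\tfrac{dQ}{dP}-1\right)\right].
\]
Finally, I would apply Cauchy--Schwarz in $L^2(P)$ to the two factors $\phi - \E_P[\phi]$ and $\tfrac{dQ}{dP}-1$, obtaining
\[
(\E_Q[\phi] - \E_P[\phi])^2 \leq \E_P\!\left[(\phi - \E_P[\phi])^2\right] \cdot \E_P\!\left[\left(\tfrac{dQ}{dP}-1\right)^2\right] = \Var_P[\phi] \cdot \chi^2(Q\|P),
\]
and a direct rearrangement yields the claimed lower bound on $\chi^2(Q\|P)$ (with the convention that the bound is vacuous when $\Var_P[\phi]=0$).

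The main obstacle here is essentially conceptual rather than computational: one must remember to center $\phi$ before invoking Cauchy--Schwarz. Skipping the centering step would only recover the weaker multiplicative bound of Lemma \ref{honorio_bound} (with $\E_P[\phi^2]$ in place of $\Var_P[\phi]$ and an additional $+1$ inside the $\chi^2$ factor). The centering is precisely the step that exploits the fact that $Q-P$ is a signed measure of total mass zero, and it is what sharpens the multiplicative inequality into the tight HCR form.
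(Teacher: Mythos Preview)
Your proof is correct and matches the paper's approach. The paper does not give a separate proof of this cited lemma, but its proof of the generalized HCR inequality (Lemma~\ref{lemma_generalhcr}) proceeds exactly as you do---writing $\E_Q[\phi]-\E_P[\phi]$ as the covariance $\E_P[(\phi-\mu_P)(dQ/dP-1)]$ and then applying H\"older's inequality---with the $\alpha=2$ case reducing to your Cauchy--Schwarz argument.
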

Next, we generalize the above bound for $\alpha$-divergence.
\begin{lemma}[The generalization of HCR inequality.] Let $P$, $Q$ and $\phi$ be defined as in Theorem \ref{theorem_cmi_fdiv}. For any $\alpha>1$, we have
	\label{lemma_generalhcr}
	\begin{equation*}
	\begin{split}
	\forall Q \text{ on } \mathcal{H}:\  \big|\E_Q[\phi]-\E_P[\phi]\big| \leq
	\widetilde{\mathcal{D_{\alpha}}}(Q\|P)^{\frac{1}{\alpha}}\big(\E_P[|\phi-\mu_P|^{\frac{\alpha}{\alpha-1}}]\big)^{\frac{\alpha-1}{\alpha}}
	\end{split}
	\end{equation*}
	where $\widetilde{\mathcal{D_{\alpha}}}(Q\|P)=\int_{\mathcal{H}}|\frac{dQ}{dP}-1|^{\alpha}dP$ and $\mu_P = \E_P[\phi]$.
\end{lemma}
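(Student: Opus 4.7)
The plan is to prove this via a centering trick followed by a single application of Hölder's inequality with the conjugate exponents $\alpha$ and $\alpha/(\alpha-1)$. Note that the statement is entirely elementary (no variational representation needed), so the generator $f(t) = |t-1|^\alpha$ of the pseudo $\alpha$-divergence plays no role beyond matching the resulting integral form.

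First I would rewrite the left-hand side as an integral against the signed density $dQ/dP - 1$. Since $Q$ and $P$ are both probability measures, $\int(dQ/dP - 1)\,dP = 0$, so for any constant $c$ (and in particular for $c = \mu_P$) we have
\begin{equation*}
\E_Q[\phi] - \E_P[\phi] \;=\; \int_{\mathcal{H}} \phi \left(\frac{dQ}{dP} - 1\right) dP \;=\; \int_{\mathcal{H}} (\phi - \mu_P)\left(\frac{dQ}{dP} - 1\right) dP.
\end{equation*}
The centering is the key step: it replaces $\phi$ by $\phi - \mu_P$ on the right-hand side, which is exactly what the target bound requires.

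Next I would take absolute values and apply Hölder's inequality with exponents $p = \alpha$ and $q = \alpha/(\alpha-1)$ (valid since $\alpha > 1$):
\begin{equation*}
\bigl|\E_Q[\phi] - \E_P[\phi]\bigr| \;\le\; \int_{\mathcal{H}} \left|\frac{dQ}{dP} - 1\right| |\phi - \mu_P| \, dP \;\le\; \left(\int_{\mathcal{H}} \left|\frac{dQ}{dP} - 1\right|^{\alpha} dP\right)^{1/\alpha} \left(\E_P\bigl[|\phi - \mu_P|^{\alpha/(\alpha-1)}\bigr]\right)^{(\alpha-1)/\alpha}.
\end{equation*}
Identifying the first factor as $\widetilde{\mathcal{D_\alpha}}(Q\|P)^{1/\alpha}$ by definition yields the claimed inequality.

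I do not expect a real obstacle here; the only thing to be careful about is the centering step, which is what distinguishes this bound from the multiplicative inequality of Lemma \ref{lemma_mulalpha} (where $\phi$ appears uncentered). One can also check that in the special case $\alpha = 2$ the bound reduces to the classical HCR inequality of Lemma \ref{lemma_hcr}, since $\widetilde{\mathcal{D}_2}(Q\|P) = \chi^2(Q\|P)$ and $\E_P[(\phi - \mu_P)^2] = \Var_P[\phi]$, providing a useful sanity check.
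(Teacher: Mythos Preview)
Your proof is correct and is essentially the same as the paper's: the paper phrases the centering step as computing $\mathrm{Cov}_P(\phi, dQ/dP)$ in two ways, but this is exactly your identity $\E_Q[\phi]-\E_P[\phi]=\int(\phi-\mu_P)(dQ/dP-1)\,dP$, after which both apply H\"older with exponents $\alpha$ and $\alpha/(\alpha-1)$.
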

We call $\widetilde{\mathcal{D_{\alpha}}}(Q\|P)$ a pseudo $\alpha$-divergence, which is a 
member of the family of $f$-divergences with $f(t)=|t-1|^{\alpha}$ for $\alpha > 1$. See Appendix \ref{appendix_alpha_is_f} for the formal proof.
Straightforwardly, we can obtain Lemma \ref{lemma_hcr} by choosing $\alpha=2$. 

\section{Application to PAC-Bayesian Theory}
\label{pacbayesianbounds}

\begin{table*}[t]
	\centering
	\caption{Summary of PAC-Bayesian bounds with $\alpha$-divergence and $\chi^2$-divergence.}
	\label{tab:3}
	\begin{tabular}{ p{2.3cm}p{0.3cm}p{8cm}p{3.5cm}}
		
		\multicolumn{2}{l}{\textbf{Loss \& Divergence}} & \textbf{Generalization upper bound for $R_D(G_Q)-R_S(G_Q)$, for every $Q$, and  a fixed $P$} & \textbf{Reference}\\
		\hhline{====}
		Bounded Loss& $\alpha$ & $O\big(R[\frac{1}{m}\log(\frac{1}{\delta})]^{\frac{1}{2}}D_{\alpha}(Q\|P)^{\frac{1}{2\alpha}}\big)$ & Proposition \ref{prop_boundloss}\\
		
		& $\alpha$   & $O\big((\frac{1}{m})^{\frac{1}{2}} [D_{\alpha}(Q\|P) + (R^2\log(\frac{1}{\delta}))^{\frac{\alpha}{\alpha-1}} ]^{\frac{1}{2}}\big)$ & Proposition \ref{prop_boundloss}  \\
		
		& $\chi^2$ & $O\big( R [\frac{1}{m}\log(\frac{1}{\delta})]^{\frac{1}{2}}\chi^2(Q\|P)^{\frac{1}{4}}  \big)$ &  Corollary \ref{cor_pac_chi_bounded_loss}\\
		
		& $\chi^2$   & $O\big( (\frac{1}{m})^{\frac{1}{2}}[\chi^2(Q\|P) + (R^2\log(\frac{1}{\delta}))^2]^{\frac{1}{2}}\big)$ & Corollary \ref{cor_pac_chi_bounded_loss}  \\
		
		0-1 loss& $\chi^2$ & $O\big((\frac{1}{m\delta})^{\frac{1}{2}}\chi^2(Q\|P)^{\frac{1}{2}}\big)$ &  \cite{honorio, LucBegin}\\
		
		\hline
		Sub-Gaussian& $\alpha$ & $O\big(\sigma[\frac{1}{m}\log(\frac{1}{\delta})]^{\frac{1}{2}}D_{\alpha}(Q\|P)^{\frac{1}{2\alpha}}\big)$ & Proposition \ref{prop_pac_subgauss}\\
		
		& $\alpha$   & $O\big((\frac{1}{m})^{\frac{1}{2}}[D_{\alpha}(Q\|P) + (\sigma^2\log(\frac{1}{\delta}))^{\frac{\alpha}{\alpha-1}}]^{\frac{1}{2}}\big)$ & Proposition \ref{prop_pac_subgauss}  \\
		
		& $\alpha$   & $O\big(\sigma(\frac{1}{m})^{\frac{1}{2}}(\frac{1}{\delta})^{\frac{\alpha-1}{\alpha}}  D_{\alpha}(Q\|P)^{\frac{1}{\alpha}}\big)$ &  Theorem 1 \& Proposition 6 in \cite{Alquier2018}  \\
		
		& $\chi^2$ & $ O\big(\sigma[\frac{1}{m}\log(\frac{1}{\delta})]^{\frac{1}{2}}\chi^2(Q\|P)^{\frac{1}{4}}\big)$ &  Corollary \ref{cor_pac_subgauss_chi2}\\
		
		& $\chi^2$   & $ O\big((\frac{1}{m})^{\frac{1}{2}}[\chi^2(Q\|P)+(\sigma^2\log(\frac{1}{\delta}))^2]^{\frac{1}{2}}\big)$ & Corollary \ref{cor_pac_subgauss_chi2}  \\
		
		& $\chi^2$ & $O\big( \sigma(\frac{1}{m\delta})^{\frac{1}{2}}\chi^2(Q\|P)^{\frac{1}{2}}\big)$ &  Theorem 1 \& Proposition 6 in \cite{Alquier2018}\\
		
		\hline
		Sub-exponential& $\alpha$ & $O\big(\frac{\beta}{m}\log( \frac{1}{\delta})D_{\alpha}(Q\|P)^{\frac{1}{2\alpha}}\big)$ & Proposition \ref{prop_pac_subexp}\\
		
		For & $\alpha$   & $O\big((\frac{1}{m})^{\frac{1}{2}}[D_{\alpha}(Q\|P)+m^{\frac{-\alpha}{\alpha-1}}(\beta\log( \frac{1}{\delta}))^{\frac{2\alpha}{\alpha-1}}]^{\frac{1}{2}}\big)$ & Proposition \ref{prop_pac_subexp}  \\
		
		$m<\frac{2\beta^2\log(\frac{2}{\delta})}{\sigma^2}$& $\chi^2$ & $O\big(\frac{\beta}{m}\log( \frac{1}{\delta})\chi^2(Q\|P)^{\frac{1}{4}}\big)$ &  Corollary \ref{cor_pac_sub_exp_chi}\\
		
		& $\chi^2$   & $ O\big( (\frac{1}{m})^{\frac{1}{2}} [(\chi^2(Q\|P)+\frac{1}{m^2}(\beta\log( \frac{1}{\delta}))^4)]^{\frac{1}{2}} \big)$ & Corollary \ref{cor_pac_sub_exp_chi}  \\
		
		\hline
		Bounded & $\alpha$ & $O\big( \sigma(\frac{1}{m\delta})^{\frac{1}{2}}D_{\alpha}(Q\|P)^{\frac{1}{2\alpha}} \big)$ & Proposition \ref{prop_pac_boundedvar}\\
		
		Variance& $\alpha$   & $O\big((\frac{1}{m})^{\frac{1}{2}}[D_{\alpha}(Q\|P)+(\frac{\sigma^2}{\delta})^{\frac{\alpha}{\alpha-1}}]^{\frac{1}{2}} \big)$ & Proposition \ref{prop_pac_boundedvar}  \\
		
		& $\alpha$   & $O\big( \sigma(\frac{1}{m})^{\frac{1}{2}} (\frac{D_{\alpha}(Q\|P)}{\delta^{\alpha-1}})^{\frac{1}{\alpha}} \big)$ &   Proposition 4 in \cite{Alquier2018}  \\
		
		& $\chi^2$ & $O\big( \sigma(\frac{1}{m\delta})^{\frac{1}{2}}\chi^2(Q\|P)^{\frac{1}{4}} \big)$ &  Corollary \ref{cor_pac_boundedvar_chi}\\
		
		& $\chi^2$   & $ O\big( (\frac{1}{m})^{\frac{1}{2}}[\chi^2(Q\|P)+(\frac{\sigma^2}{\delta})^2]^{\frac{1}{2}} \big)$ & Corollary \ref{cor_pac_boundedvar_chi}  \\
		
		& $\chi^2$ & $O\big( \sigma(\frac{1}{m\delta})^{\frac{1}{2}}\chi^2(Q\|P)^{\frac{1}{2}} \big)$ &  Corollary 1 in \cite{Alquier2018}\\
		
	\end{tabular}
\end{table*}

In this section, we will explore the applications of our change of measure inequalities. We consider an arbitrary input space \( \mathcal{X}\) and a output space  $\mathcal{Y}$ . The samples \((x,y) \in \mathcal{X} \times \mathcal{Y}\) are input-output pairs. Each example \(\mathnormal{(x,y)}\) is drawn \textit{i.i.d.} according to
a fixed, but unknown, distribution $D$ on $ \mathcal{X} \times \mathcal{Y}$. Let $\ell:\mathcal{Y} \times \mathcal{Y} \to \R$ denote a generic loss function. 
The risk $R_D(h)$ of any predictor \(\mathnormal{h}: \mathcal{X} \rightarrow \mathcal{Y}\) is defined as
the expected loss induced by samples drawn
according to $D$. Given a training set $S$ of $m$ samples,
the empirical risk $R_S(h)$ of any predictor $h$ is defined
by the empirical average of the loss. That is
$$ R_D(h) = \mathop{\E}_{(x,y) \sim D} \ell(h(x), y), \quad  R_S(h) = \frac{1}{|S|} \sum_{(x,y) \in S} \ell(h(x), y)$$

In the PAC-Bayesian framework, we consider a hypothesis space $\mathcal{H}$ of predictors, a prior distribution
$P$ on $\mathcal{H}$, and a posterior distribution $Q$ on $\mathcal{H}$. The
prior is specified before exploiting the information contained in $S$, while the posterior is obtained by running                                                              non
a learning algorithm on $S$. The PAC-Bayesian theory usually studies the stochastic Gibbs predictor $G_Q$.
Given a distribution $Q$ on $\mathcal{H}$, $G_Q$ predicts an example $x$ by drawing a predictor $h$ according to $Q$, and returning $h(x)$. The risk of $G_Q$ is then defined as follows.
For any probability distribution $Q$ on
a set of predictors, the Gibbs risk $R_D(G_Q)$ is the expected
risk of the Gibbs predictor $G_Q$ relative to $D$. Hence,
\begin{equation}
\label{gibbsrisk}
R_D(G_Q) = \mathop{\mathbb{E}}_{(x,y) \sim D}\mathop{\mathbb{E}}_{h \sim Q} \ell(h(x), y)
\end{equation}
Usual PAC-Bayesian bounds give guarantees on the generalization risk $R_D(G_Q)$. Typically, these bounds rely on the empirical risk $R_S(G_Q)$ defined as follows.
\begin{equation}
\label{gibbsempiricalrisk}
R_S(G_Q) = \frac{1}{|S|} \sum_{(x,y) \in S} \mathop{\mathbb{E}}_{h \sim Q} \ell(h(x), y)
\end{equation}

Due to space constraints,  we fully present PAC-Bayes generalization bounds for losses with bounded variance. Other results for bounded losses, sub-Gaussian losses and sub-exponential losses are included in Appendix \ref{appendix_pac_bayes}. Still, we briefly discuss our new results in Section \ref{discussion_pacbayes}.
	
\subsection{Loss Function with Bounded Variance}
\label{section_loss_boundedvar}
In this section, we present our PAC-Bayesian bounds for the loss functions with bounded variance.  Assuming any arbitrary distribution with bounded variance on the loss function $\ell$ (i.e., $ \mathop{\Var}_{(x,y)\sim D}[\ell(h(x),y)]\leq\sigma^2$ for any $h \in \mathcal{H}$), we have the following PAC-Bayesian bounds.

\begin{proposition}[The PAC-Bayesian bounds for loss function with bounded variance]
	\label{prop_pac_boundedvar}
	Let $P$ be a fixed prior distribution over a hypothesis space any $h \in \mathcal{H}$. For a given posterior distribution $Q$ over an infinite hypothesis space $\mathcal{H}$, let $R_D(G_Q)$ and $R_S(G_Q)$ be the Gibbs risk and the empirical Gibbs risk as in Equation \eqref{gibbsrisk} and \eqref{gibbsempiricalrisk} respectively. 
	For the sample size $m$ and $\alpha > 1$, with probability at least $1-\delta$, simultaneously for all posterior distributions $Q$, we have
	\begin{equation*}
	\begin{split}
	R_D(G_Q)  \leq R_S(G_Q)+\sqrt{\frac{\sigma^2}{m\delta}\big(\alpha(\alpha-1)D_{\alpha}(Q\|P)+1\big)^{\frac{1}{\alpha}}}
	\end{split}
	\end{equation*}
	\begin{equation}
	\label{eq_additive_boundedvar}
	\begin{split}
	R_D(G_Q)  \leq R_S(G_Q)+\sqrt{\frac{1}{m}\bigg(D_{\alpha}(Q\|P)+\frac{1}{\alpha(\alpha-1)}\bigg)+\frac{1}{m\alpha}\bigg(\frac{\sigma^2(\alpha-1)}{\delta}\bigg)^{\frac{\alpha}{\alpha-1}}}
	\end{split}
	\end{equation}
\end{proposition}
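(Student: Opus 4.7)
The plan is to prove each of the two inequalities by combining one of the $\alpha$-divergence change-of-measure inequalities from Section \ref{ChangeofMeasureInequalities} with Markov's inequality applied over the sample $S$, and the bounded-variance assumption.

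For the first (multiplicative) inequality, I plan to apply the multiplicative $\alpha$-divergence change of measure (Lemma \ref{lemma_mulalpha}) with $\phi(h) = |R_D(h) - R_S(h)|$. Using Jensen on the $Q$-side, $|R_D(G_Q) - R_S(G_Q)| = |\E_Q[R_D-R_S]| \le \E_Q[|R_D-R_S|]$, so Lemma \ref{lemma_mulalpha} gives
\begin{equation*}
|R_D(G_Q) - R_S(G_Q)| \le \bigl(\alpha(\alpha-1)D_\alpha(Q\|P)+1\bigr)^{\frac{1}{\alpha}}\bigl(\E_P[|R_D-R_S|^{\frac{\alpha}{\alpha-1}}]\bigr)^{\frac{\alpha-1}{\alpha}}.
\end{equation*}
To control the $P$-side, I will invoke Fubini and the bounded-variance assumption: for each fixed $h$, $\E_S[(R_D(h)-R_S(h))^2] = \mathop{\Var}_S(R_S(h)) \le \sigma^2/m$. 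Since $\frac{\alpha}{\alpha-1} \le 2$ for $\alpha \ge 2$, monotonicity of $L^p$ norms gives $\E_S[|R_D-R_S|^{\alpha/(\alpha-1)}] \le (\sigma^2/m)^{\alpha/(2(\alpha-1))}$. A Markov step over $S$, using $\E_S[\E_P[\,\cdot\,]] = \E_P[\E_S[\,\cdot\,]]$, then produces a bound that holds with probability at least $1-\delta$; plugging this into the display above and squaring yields the first stated inequality.

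For the second (additive) inequality, I plan to apply the unconstrained $\alpha$-divergence change of measure (Lemma \ref{lemma_cmi_unconstrainedalpha}) to $\phi(h) = m\,(R_D(h) - R_S(h))^2$. This choice is made so that dividing the resulting bound by $m$ produces the prefactors $\tfrac{1}{m}D_\alpha(Q\|P)$ and $\tfrac{1}{m\alpha(\alpha-1)}$ appearing in Equation \eqref{eq_additive_boundedvar}. Jensen on the $Q$-side gives $(R_D(G_Q) - R_S(G_Q))^2 \le \E_Q[(R_D-R_S)^2]$, so after division by $m$ the bound reads
\begin{equation*}
(R_D(G_Q)-R_S(G_Q))^2 \le \tfrac{1}{m}D_\alpha(Q\|P) + \tfrac{1}{m\alpha(\alpha-1)} + \tfrac{(\alpha-1)^{\alpha/(\alpha-1)}m^{1/(\alpha-1)}}{\alpha}\,\E_P\bigl[(R_D-R_S)^{\frac{2\alpha}{\alpha-1}}\bigr].
\end{equation*}
The remaining high-moment term on the $P$-side is controlled via the same Markov + Fubini + $L^p$-monotonicity argument, which reshuffles a factor $(\sigma^2(\alpha-1)/\delta)^{\alpha/(\alpha-1)}$; taking the square root gives the stated additive bound.

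The main obstacle will be the careful bookkeeping of exponents and constants needed to match the exact form in the statement — in particular, pushing the Markov factor $1/\delta$ through a nontrivial power $\alpha/(\alpha-1)$ and aligning it with the scaling $m$ chosen inside $\phi$ so that the two terms under the square root come out as $\tfrac{1}{m}(D_\alpha + \tfrac{1}{\alpha(\alpha-1)})$ and $\tfrac{1}{m\alpha}(\sigma^2(\alpha-1)/\delta)^{\alpha/(\alpha-1)}$ respectively. Once the exponents are correctly aligned, the calculation reduces to elementary algebra, and the same template (change of measure $+$ Fubini + Markov + $L^p$ monotonicity) then extends directly to the bounded, sub-Gaussian, and sub-exponential loss regimes discussed elsewhere in the paper.
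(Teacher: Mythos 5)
There are two genuine gaps in your plan. First, for the multiplicative bound you apply Lemma \ref{lemma_mulalpha} to $\phi=|R_D-R_S|$. That choice yields a bound whose complexity factor is $\bigl(\alpha(\alpha-1)D_{\alpha}(Q\|P)+1\bigr)^{1/\alpha}$, i.e.\ essentially the Alquier-type bound in Equation \eqref{eq_cor_1_1_alq}; squaring and re-rooting at the end does not change this exponent. The stated inequality has the strictly smaller exponent $\tfrac{1}{2\alpha}$, and the paper obtains it precisely by applying the change of measure to the \emph{squared} discrepancy $\phi_D(h)=t(R_S(h)-R_D(h))^2$ (the choice $\Delta(q,p)=(q-p)^2$, with Jensen on the $Q$-side) and taking the square root only after the change-of-measure step. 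Second, your mechanism for controlling the $P$-side moment cannot work under a bounded-variance assumption: $L^p$ monotonicity bounds lower moments by higher ones, so it restricts your first argument to $\alpha\ge 2$ (the claim is for all $\alpha>1$), and for the additive bound the required moment of $R_S-R_D$ has order $\tfrac{2\alpha}{\alpha-1}>2$ for \emph{every} $\alpha>1$, which a variance bound alone simply does not control (a distribution can have variance $\sigma^2$ and infinite fourth moment).

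The paper's route avoids both problems: Chebyshev's inequality gives $\Pr\bigl((R_S(h)-R_D(h))^2\ge \epsilon/t\bigr)\le t\sigma^2/(m\epsilon)$, hence $\phi_D(h)\le t\sigma^2/(m\delta)$ with probability at least $1-\delta$; this high-probability bound is then raised to the power $\tfrac{\alpha}{\alpha-1}$ and substituted for $\E_P[\phi_D^{\alpha/(\alpha-1)}]$ as in Equation \eqref{momentbound}, after which Lemma \ref{lemma_mulalpha} gives the first display and Lemma \ref{lemma_cmi_unconstrainedalpha} with $t=m$ gives Equation \eqref{eq_additive_boundedvar}. In other words, the variance hypothesis is converted into a $(1-\delta)$-confidence \emph{uniform} bound on the squared deviation rather than into a moment bound, and that is the step your Fubini--Markov--$L^p$ template cannot replicate. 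If you want to keep a genuinely moment-based argument you would need to assume control of moments of order $\tfrac{2\alpha}{\alpha-1}$, which is a strictly stronger hypothesis than the one in the proposition.
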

By setting $\alpha=2$ in Proposition \ref{cor_pac_boundedvar_chi}, we have the following claim.
\begin{corollary}[The PAC-Bayesian bounds with $\chi^2$-divergence for bounded variance loss function]
	\label{cor_pac_boundedvar_chi}
	Let $P$ be any prior distribution over an infinite hypothesis space $\mathcal{H}$. For a given posterior distribution $Q$ over an infinite hypothesis space $\mathcal{H}$, let $R_D(G_Q)$ and $R_S(G_Q)$ be the Gibbs risk and the empirical Gibbs risk as in Equation \eqref{gibbsrisk} and \eqref{gibbsempiricalrisk} respectively. 
	For the sample size $m>0$, with probability at least $1-\delta$, simultaneously for all posterior distributions $Q$, we have
	\begin{equation}
	\label{cor_boundedvar_loss_1}
	\begin{split}
	R_D(G_Q) &  \leq  R_S(G_Q) + \sqrt{\frac{\sigma^2}{m\delta}\sqrt{\chi^2(Q\|P)+1}}
	\end{split}
	\end{equation}	
	\begin{equation*}
	\begin{split}
	R_D(G_Q)   \leq  R_S(G_Q) + \sqrt{\frac{1}{2m}\bigg(\chi^2(Q\|P)+1+\bigg(\frac{\sigma^2}{\delta}\bigg)^2\bigg)}
	\end{split}
	\end{equation*}
\end{corollary}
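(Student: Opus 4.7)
The plan is to specialize Proposition \ref{prop_pac_boundedvar} to $\alpha=2$ and convert the resulting $\alpha$-divergence expressions into $\chi^2$-divergence expressions using the identity $\chi^2(Q\|P) = 2\,D_2(Q\|P)$. Since $\alpha=2$ satisfies the hypothesis $\alpha>1$ of Proposition \ref{prop_pac_boundedvar}, the probabilistic guarantee (with probability at least $1-\delta$, uniformly over all posteriors $Q$) is inherited directly; no new concentration argument is needed.

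First, I would verify the divergence identity. From Table \ref{tab:div}, $D_2(Q\|P)=\tfrac{1}{2}\int_{\mathcal{H}}\bigl((dQ/dP)^2-1\bigr)\,dP$, while $\chi^2(Q\|P)=\int_{\mathcal{H}}(dQ/dP-1)^2\,dP=\int_{\mathcal{H}}(dQ/dP)^2\,dP-1$, where the cross term vanishes because $\int_{\mathcal{H}} (dQ/dP)\,dP=\int_{\mathcal{H}} dQ=1$. Hence $\chi^2(Q\|P) = 2\,D_2(Q\|P)$.

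Next, I would substitute $\alpha=2$ into the first (multiplicative) bound of Proposition \ref{prop_pac_boundedvar}. With $\alpha(\alpha-1)=2$, the inner expression becomes $2D_2(Q\|P)+1=\chi^2(Q\|P)+1$, and raising to the power $1/\alpha=1/2$ yields $\sqrt{\chi^2(Q\|P)+1}$. This reproduces equation (\ref{cor_boundedvar_loss_1}) exactly. For the second (additive) bound, I would use $\tfrac{1}{\alpha(\alpha-1)}=\tfrac{1}{2}$, so that $D_2(Q\|P)+\tfrac{1}{2}=\tfrac{1}{2}(\chi^2(Q\|P)+1)$, together with $\alpha/(\alpha-1)=2$ and $\alpha-1=1$, giving $\tfrac{1}{m\alpha}\bigl(\tfrac{\sigma^2(\alpha-1)}{\delta}\bigr)^{\alpha/(\alpha-1)}=\tfrac{1}{2m}\bigl(\tfrac{\sigma^2}{\delta}\bigr)^2$. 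Factoring out $\tfrac{1}{2m}$ inside the square root then yields the second displayed inequality of the corollary.

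Since both bounds of the corollary are pure algebraic specializations of the proposition, there is essentially no mathematical obstacle; the only potential pitfall is bookkeeping, namely being careful with the factor of two coming from the identity $\chi^2=2D_2$ and with the exponents $1/\alpha$ and $\alpha/(\alpha-1)$ collapsing to the same value at $\alpha=2$. Once these are tracked correctly, both claimed inequalities follow immediately and the probability guarantee carries over.
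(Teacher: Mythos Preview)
Your proposal is correct and follows exactly the paper's approach: the paper derives this corollary simply by setting $\alpha=2$ in Proposition \ref{prop_pac_boundedvar}, and you carry out precisely that specialization, correctly tracking the identity $\chi^2(Q\|P)=2D_2(Q\|P)$ and the resulting constants. If anything, your write-up is more explicit than the paper's one-line justification.
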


Now, note that choosing $\Delta(q,p)=|q-p|$ for Equation \eqref{cor_boundedvar_loss_1} results in 
\begin{equation}
\label{eq_cor_1_1_alq}
\begin{split}
R_D(G_Q) &  \leq  R_S(G_Q) + \sqrt{\frac{\sigma^2}{m\delta}(\chi^2(Q\|P)+1)}
\end{split}
\end{equation}	
which was shown in Corollary 1 of \cite{Alquier2018}. We can easily see that \eqref{cor_boundedvar_loss_1} is tighter than \eqref{eq_cor_1_1_alq} due to the choice of $\Delta(q, p)=(q-p)^2$.  Please see Table \ref{tab:3} for details.
\subsection{Discussion}
\label{discussion_pacbayes}
Table \ref{tab:3} presents various PAC-Bayesian bounds based on our change of measure inequalities depending on different assumptions on the loss function $\ell$, and compares them with existing results in the literature. The importance of the various types of PAC-Bayes bounds is justified by the connection between PAC-Bayes bounds and regularization in a learning problem. PAC-Bayesian theory provides a guarantee that upper-bounds the risk of Gibbs predictors simultaneously for all posterior distributions $Q$. PAC-Bayes bounds enjoy this property due to change of measure  inequalities. It is a well-known fact that KL-regularized objective functions are obtained from PAC-Bayes risk bounds, in which the minimizer $\hat{Q}$ is guaranteed to exist since the risk bounds hold simultaneously for all posterior distributions $Q$. The complexity term, partially controlled by the divergence, serves as a regularizer \cite{NIPS2009_3821, NIPS2006_3120, Bousquet2002}. Additionally, the link between Bayesian inference techniques and PAC-Bayesian risk bounds was shown by \cite{NIPS2016_6569}, that is, the minimization of PAC-Bayesian risk bounds maximizes the Bayesian marginal likelihood. Our results pertain to important machine learning prediction problems, such as regression, classification and structured prediction and indicate which regularizer to use. All PAC-Bayes bounds presented here are either novel, or have a tighter complexity term than existing results in the literature. 
Since our bounds are based on either $\chi^2$-divergence or $\alpha$-divergence, we excluded the comparison with the bounds based on the KL-divergence (\cite{Seeger:2003:PGE:944919.944929, Catoni2007, NIPS2019_8539, NIPS2019_9387, NIPS2016_6569, pmlr-v98-grunwald19a,  NIPS2017_7100}). Our results for sub-exponential losses are entirely novel. For the other cases, our bounds are tighter than exisiting bounds in terms of the complexity term. For instance, our bound for bounded losses is tighter than those of \cite{honorio, LucBegin} since our bound has the complexity term $\chi^2(Q\|P)^{1/4}$ and $\log(1/\delta)$, while \cite{honorio, LucBegin} have $\chi^2(Q\|P)^{1/2}$ and $1/\delta$. For sub-Gaussian loss functions, our bound has the complexity term $D_{\alpha}(Q\|P)^{1/2\alpha}$ and $\log(1/\delta)$, whereas \cite{Alquier2018} has $D_{\alpha}(Q\|P)^{1/\alpha}$ and $1/\delta$ respectively. In addition, our additive bounds, such as Equation \eqref{eq_additive_boundedvar}, have better rates than the existing bounds in \cite{Alquier2018}, since in our bound, $D_{\alpha}(Q\|P)$ and $1/\delta$ are  added, while in \cite{Alquier2018}, $D_{\alpha}(Q\|P)$ and $1/\delta$ are multiplied.

\section{Non-Asymptotic Interval for Monte Carlo Estimates}
\begin{table*}[t]
	\centering
	\caption{Summary of non-asymptotic interval for Monte Carlo estimates}
	\label{tab:4}
	\begin{tabular}{p{1.35cm}p{10cm}p{1.85cm}}	
		\textbf{Divergence} &   \textbf{Non-asymptotic interval for Monte Carlo estimates \newline} $\big|\E_{Q}[\phi (X)] - \frac{1}{n} \sum^{n}_{i=1}\phi(X_i) \big| \leq  \frac{4L^2\log(\frac{2}{\delta})}{n\gamma}+\mathcal{K}$ & \textbf{Reference}\\
		\hhline{===}
		Pseudo $\alpha$ & $\mathcal{K} = \frac{2^{\frac{2\alpha-1}{\alpha}}L\widetilde{\mathcal{D_{\alpha}}}(Q\|P)^{\frac{1}{\alpha}}}{\sqrt{\gamma}}\Gamma(\frac{3\alpha-2}{2(\alpha - 1)})^{\frac{\alpha-1}{\alpha}}$ & Proposition \ref{nonaymp_1}  \\
		
		$\chi^2$ & $\mathcal{K} =\begin{cases}\sqrt{\chi^2(Q\|P)\{\frac{1}{n}\sum_{i=1}^{n}\phi^2(x_i)+\frac{16L^2}{\gamma}\sqrt{\frac{1}{n}\log \frac{2}{\delta}}\}}$ if $ \log(\frac{2}{\delta}) \leq n\\ 
		\sqrt{\chi^2(Q\|P)\{\frac{1}{n}\sum_{i=1}^{n}\phi^2(x_i)+\frac{16L^2}{n\gamma}\log \frac{2}{\delta}\}}$ if $n < \log(\frac{2}{\delta}) 
		\end{cases}$ & Proposition \ref{nonaymp_2}  \\
		
		$KL$ &  $\mathcal{K} = KL(Q\|P)+\frac{L^2}{n\gamma}$ & Proposition \ref{nonaymp_kl}\\
		
	\end{tabular}
\end{table*}

We now turn to another application of change of measure inequalities. We will introduce a methodology that enables us to find a non-asymptotic interval for Monte Calro (MC) estimate. All results shown in this section are entirely novel and  summarized in Table \ref{tab:4}. Under some circumstances, our methodology could be a promising alternative to existing MC methods (e.g., Importance Sampling and Rejection Sampling \cite{10.5555/1051451}) because their non-asymptotic intervals are hard to analyze. In this section, we consider a problem to estimate an expectation of an Lipschitz function $\phi: \R^d \rightarrow \R$ with respect to a complicated distribution $Q$, namely $\E_{Q}[\phi (X)]$ by the sample mean $\frac{1}{n}\sum^n_{i=1}\phi(X_i)$ over a distribution $P$, where $Q$ is any distribution we are not able to sample from. For a motivating application,  consider $Q$ being a probabilistic graphical model (see e.g., \cite{honorio_pgm}).  Assume that we have a strongly log-concave distribution $P$ with a parameter $\gamma$ for a sampling distribution (see Appendix \ref{appendix_logconcave} for definition). Under the above conditions, we claim the following proposition.
\begin{proposition}[A general expression of non-asymptotic interval for Monte Carlo estimates]
	\label{nonaymp_1}
	Let $X$ be a d-dimensional random vector.  Let $P$ and $Q$ be the probability measures induced by $X$. Let $P$ be a strongly log-concave distribution with parameter $\gamma>0$. Let $\phi : \R^d \rightarrow \R$ be any L-Lipschitz function with respect to the Euclidean norm. Suppose we draw i.i.d. samples $X_1, X_2, ..., X_n \sim P$. For $\alpha>1$, with probability at least $1-\delta$,  we have
	\begin{equation*}
	\bigg|\E_{Q}[\phi (X)] - \frac{1}{n} \sum^{n}_{i=1}\phi(X_i) \bigg| \leq  \frac{4L^2\log(\frac{2}{\delta})}{n\gamma}+\frac{2^{\frac{2\alpha-1}{\alpha}}L\widetilde{\mathcal{D_{\alpha}}}(Q\|P)^{\frac{1}{\alpha}}}{\sqrt{\gamma}}\Gamma\bigg(\frac{3\alpha-2}{2(\alpha - 1)}\bigg)^{\frac{\alpha-1}{\alpha}}
	\end{equation*}	
\end{proposition}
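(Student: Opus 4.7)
The plan is to split the estimation error by the triangle inequality into a concentration piece and a change-of-measure piece, and attack each with a distinct tool:
\begin{equation*}
\bigg|\E_{Q}[\phi(X)] - \frac{1}{n}\sum_{i=1}^n \phi(X_i)\bigg| \leq \big|\E_Q[\phi] - \E_P[\phi]\big| + \bigg|\E_P[\phi] - \frac{1}{n}\sum_{i=1}^n \phi(X_i)\bigg|.
\end{equation*}
The goal is to dominate each summand on the right by the corresponding summand in the statement of the proposition, each with failure probability at most $\delta/2$; a union bound then closes the argument.

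For the sample-mean deviation (second summand), I would invoke the tensorization of strong log-concavity: the product measure $P^{\otimes n}$ on $(\R^d)^n$ is strongly log-concave with the same parameter $\gamma$, and the map $(x_1,\ldots,x_n)\mapsto \frac{1}{n}\sum_i \phi(x_i)$ is Lipschitz with constant $L/\sqrt{n}$ in the Euclidean norm. The classical Lipschitz concentration for strongly log-concave measures (a consequence of Bakry--Émery / log-Sobolev) then yields a sub-Gaussian tail of the form $2\exp(-n\gamma t^2/(2L^2))$, from which the $\tfrac{4L^2\log(2/\delta)}{n\gamma}$ contribution follows by setting the failure probability equal to $\delta/2$ and rearranging the resulting quadratic.

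For the change-of-measure piece, I would apply the generalized HCR inequality (Lemma~\ref{lemma_generalhcr}) at the same exponent $\alpha$:
\begin{equation*}
\big|\E_Q[\phi] - \E_P[\phi]\big| \leq \widetilde{\mathcal{D_{\alpha}}}(Q\|P)^{1/\alpha}\big(\E_P[|\phi - \mu_P|^{\alpha/(\alpha-1)}]\big)^{(\alpha-1)/\alpha}.
\end{equation*}
The same Lipschitz concentration applied to a single draw makes $\phi(X)-\mu_P$ sub-Gaussian with variance proxy $L^2/\gamma$, so a layer-cake integration of its tail yields the moment bound $\E_P[|\phi-\mu_P|^p] \leq 2\,(2L^2/\gamma)^{p/2}\,\Gamma(p/2+1)$ for $p = \alpha/(\alpha-1)$. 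The algebraic identity $p/2+1 = (3\alpha-2)/(2(\alpha-1))$ produces exactly the Gamma argument of the statement; raising the moment bound to the $(\alpha-1)/\alpha$ power and bundling the powers of $2$, $L$ and $\gamma$ then yields the second summand of the claimed bound.

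The main obstacle will be the bookkeeping of constants---pinning down the exact exponent of $2$, handling the half-integer power of $\gamma$, and correctly tensorizing the log-concave concentration inequality to a sample of size $n$---rather than any deep conceptual difficulty. The change-of-measure mechanism (generalized HCR) and the concentration input (Lipschitz tails on strongly log-concave measures) do the real work, so the remainder of the proof is a careful but routine power-tracking computation followed by a union bound at level $\delta/2$.
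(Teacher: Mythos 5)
Your proposal follows essentially the same route as the paper's proof: the triangle inequality about $\E_P[\phi]$, the generalized HCR inequality (Lemma~\ref{lemma_generalhcr}) for the change-of-measure term, sub-Gaussian concentration of Lipschitz functions under strong log-concavity (the paper cites this directly for the variance proxy $\sigma^2 = 2L^2/\gamma$ rather than tensorizing), and a layer-cake integration of the tail to bound $\E_P[|\phi-\mu_P|^{\alpha/(\alpha-1)}]$, with the same identity $\frac{\alpha}{2(\alpha-1)}+1=\frac{3\alpha-2}{2(\alpha-1)}$ producing the Gamma factor. The one structural simplification you are missing is that the HCR summand is entirely deterministic (it involves only population quantities), so no union bound or $\delta/2$ split is needed: the full budget $\delta$ is spent on the single concentration event for the sample mean, which is how the paper obtains $\log(2/\delta)$ rather than the $\log(4/\delta)$ your allocation would yield.
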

The second term on the right hand side indicates a bias of an empirical mean $\frac{1}{n}\sum^n_{i=1}\phi(X_i)$ under the sampling distribution $P$.  Next, we present a more informative bound.
\begin{proposition}[$\chi^2$-based expression of non-asymptotic interval for Monte Carlo estimates]
	\label{nonaymp_2}
	Let $X, P, Q, L, \gamma$ and $\phi$ be defined as in Proposition \ref{nonaymp_1} . Suppose we have i.i.d. samples $X_1, X_2, ..., X_n \sim P$. Then, with probability at least $(1-\delta)^2$,  we have
	\begin{equation*}
	\bigg|\E_{Q}[\phi (X)] - \frac{1}{n} \sum^{n}_{i=1}\phi(X_i) \bigg| \leq \frac{4L^2\log(\frac{2}{\delta})}{n\gamma} + \mathcal{K}
	\end{equation*}	
	where
	\begin{equation*}
	\begin{split}
	\mathcal{K} = \begin{cases}\sqrt{\chi^2(Q\|P)\{\frac{1}{n}\sum_{i=1}^{n}\phi^2(X_i)+\frac{16L^2}{\gamma}\sqrt{\frac{1}{n}\log \frac{2}{\delta}}\}} \quad \text{if }  \log(\frac{2}{\delta}) \leq n\\ 
	\sqrt{\chi^2(Q\|P)\{\frac{1}{n}\sum_{i=1}^{n}\phi^2(X_i)+\frac{16L^2}{n\gamma}\log \frac{2}{\delta}\}}  \quad \text{if } n < \log(\frac{2}{\delta}) 
	\end{cases}
	\end{split}
	\end{equation*}	
\end{proposition}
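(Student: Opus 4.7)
The plan is to mirror the proof of Proposition \ref{nonaymp_1}, but to replace the pseudo-$\alpha$ generalized HCR step by the classical $\chi^2$ version. Since the variance of $\phi$ under $P$ that appears in $\chi^2$-HCR is not directly observable, I would control it through an empirical second moment plus a concentration correction; this extra probabilistic step is precisely what turns the $1-\delta$ of Proposition \ref{nonaymp_1} into $(1-\delta)^2$ here.

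First I would use the triangle inequality to split
\begin{equation*}
\bigg|\E_Q[\phi(X)] - \frac{1}{n}\sum_{i=1}^n \phi(X_i)\bigg| \leq \big|\E_Q[\phi(X)] - \E_P[\phi(X)]\big| + \bigg|\E_P[\phi(X)] - \frac{1}{n}\sum_{i=1}^n \phi(X_i)\bigg|.
\end{equation*}
The right-hand Monte Carlo error is identical to the one handled in Proposition \ref{nonaymp_1}: because $P$ is strongly log-concave with parameter $\gamma$ and $\phi$ is $L$-Lipschitz, the concentration inequality for Lipschitz functions of strongly log-concave random vectors (Bobkov-Ledoux) yields sub-Gaussian tails for the sample mean with variance proxy proportional to $L^2/(n\gamma)$, and reusing the computation from Proposition \ref{nonaymp_1} this piece is bounded by $\frac{4L^2\log(2/\delta)}{n\gamma}$ with probability at least $1-\delta$.

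For the bias term I would apply the generalized HCR inequality (Lemma \ref{lemma_generalhcr}) with $\alpha=2$, recognizing that $\widetilde{\mathcal{D_{\alpha}}}(Q\|P) = \chi^2(Q\|P)$ in this case, to obtain
\begin{equation*}
|\E_Q[\phi] - \E_P[\phi]| \leq \sqrt{\chi^2(Q\|P)\,\E_P[(\phi-\mu_P)^2]} \leq \sqrt{\chi^2(Q\|P)\,\E_P[\phi^2]},
\end{equation*}
where the last step uses $\Var_P[\phi] \leq \E_P[\phi^2]$. It remains to replace the unknown $\E_P[\phi^2]$ by the empirical second moment $\frac{1}{n}\sum_i \phi^2(X_i)$ plus an explicit high-probability correction.

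To concentrate the empirical second moment I would exploit that $\phi$ is $L$-Lipschitz on a strongly log-concave measure, so $\phi(X) - \E_P[\phi(X)]$ is sub-Gaussian with variance proxy of order $L^2/\gamma$; consequently $\phi(X)^2$ is sub-exponential with parameters controlled by $L^2/\gamma$. A Bernstein-type inequality applied to the i.i.d. sum $\frac{1}{n}\sum_i \phi^2(X_i) - \E_P[\phi^2]$ then splits into the two regimes stated in the proposition: when $\log(2/\delta)\leq n$ the sub-Gaussian branch dominates and the deviation is of order $\frac{L^2}{\gamma}\sqrt{\log(2/\delta)/n}$, while when $n<\log(2/\delta)$ the sub-exponential tail dominates and the deviation is of order $\frac{L^2}{n\gamma}\log(2/\delta)$. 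Plugging this upper bound on $\E_P[\phi^2]$ into the previous display reproduces exactly the form of $\mathcal{K}$, and combining the two probability-$(1-\delta)$ events by a union bound yields the advertised $(1-\delta)^2$ probability. The main obstacle is this last concentration step, because $\phi^2$ is not globally Lipschitz and one cannot reuse Bobkov-Ledoux directly: one must pass through sub-Gaussian/sub-exponential moment bounds with careful constant bookkeeping (Orlicz-2 norms, Bernstein constants) to recover both regimes and the precise $16L^2/\gamma$ coefficient appearing in $\mathcal{K}$.
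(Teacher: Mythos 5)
Your proposal follows essentially the same route as the paper's proof: the same triangle-inequality decomposition into a bias term and a Monte Carlo error term, the $\chi^2$ HCR bound combined with $\Var_P[\phi]\le\E_P[\phi^2]$, sub-Gaussian concentration of the sample mean from strong log-concavity and Lipschitzness, and a two-regime Bernstein-type concentration of the empirical second moment obtained by noting that the square of a sub-Gaussian variable is sub-exponential. The only caveat --- one the paper shares --- is that intersecting two probability-$(1-\delta)$ events over the same sample via a union bound yields probability $1-2\delta$ rather than the advertised $(1-\delta)^2$.
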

This result provides an insight into how good a proposal distribution $P$ is, meaning that the effect of the deviation between $P$ and $Q$, namely $\chi^2(Q\|P)$, is inflated by the empirical variance. This implication might support some results in the literature (\cite{Cornebise, NIPS2017_6866}). So far we have considered the pseudo $\alpha$-divergence as well as $\chi^2$ divergence.  \cite{Duembgen} presented an approximation for an arbitrary distribution $Q$ by distributions with log-concave density with respect to the KL divergence, which motivates the following result.
\begin{proposition}[KL-based expression of non-asymptotic interval for Monte Carlo estimates]
	\label{nonaymp_kl}
	Let $X, P, Q, L, \gamma$ and $\phi$ be defined as in Proposition \ref{nonaymp_1} . Suppose we have i.i.d. samples $X_1, X_2, ..., X_n \sim P$. Then, with probability at least $1-\delta$,  we have
	\begin{equation*}
	\bigg|\E_{Q}[\phi (X)] - \frac{1}{n} \sum^{n}_{i=1}\phi(X_i) \bigg| \leq
	\frac{4L^2\log(\frac{2}{\delta})}{n\gamma} + KL(Q\|P)+\frac{L^2}{n\gamma}
	\end{equation*}	
\end{proposition}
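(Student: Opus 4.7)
The plan is to split the deviation via the triangle inequality,
$$\bigl|\E_Q[\phi(X)] - \tfrac{1}{n}\sum_{i=1}^n\phi(X_i)\bigr| \;\leq\; \bigl|\E_Q[\phi]-\E_P[\phi]\bigr| \;+\; \bigl|\E_P[\phi] - \tfrac{1}{n}\sum_{i=1}^n\phi(X_i)\bigr|,$$
and to control the two summands by the KL change of measure inequality (the Donsker--Varadhan row at the top of Table~\ref{tab:1}, obtained from Corollary~\ref{cor_cmi_unconstrainedfdiv} with $f(t)=t\log t - t + 1$) and a sub-Gaussian concentration bound for Lipschitz functions under a strongly log-concave law, respectively.

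\textbf{Step 1 (bias term).} I would apply the constrained KL change of measure to the centered function $\psi = \phi - \E_P[\phi]$, which gives $\E_Q[\phi] - \E_P[\phi] \leq KL(Q\|P) + \log \E_P[e^{\phi - \E_P[\phi]}]$. Since $P$ is $\gamma$-strongly log-concave, the Bakry--\'Emery criterion yields a logarithmic Sobolev inequality of constant $\gamma$, so the Herbst argument renders every $L$-Lipschitz $\phi$ sub-Gaussian under $P$ with variance proxy $L^2/\gamma$, and therefore $\log \E_P[e^{\phi - \E_P[\phi]}] \leq L^2/(2\gamma)$. Running the same estimate with $-\phi$ symmetrizes the bound into an absolute value. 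To match the $L^2/(n\gamma)$ shape stated in the proposition I would instead apply Donsker--Varadhan on the tensorized measures $(Q^{\otimes n}, P^{\otimes n})$ to the averaged function $\bar{f}(x_1,\dots,x_n) = \tfrac{1}{n}\sum_i \phi(x_i)$, leveraging the additivity $KL(Q^{\otimes n}\|P^{\otimes n}) = n\,KL(Q\|P)$ together with the reduced sub-Gaussian proxy $L^2/(n\gamma)$ of $\bar{f}$ and a suitable choice of the scaling parameter $\lambda$.

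\textbf{Step 2 (fluctuation term).} The product measure $P^{\otimes n}$ on $(\mathbb{R}^d)^n$ is itself $\gamma$-strongly log-concave, and $\bar{f}$ is $L/\sqrt{n}$-Lipschitz in the Euclidean norm (since $\|\nabla \bar{f}\|^2 = n^{-2}\sum_i \|\nabla \phi(x_i)\|^2 \leq L^2/n$). The same log-Sobolev/Bobkov--Ledoux concentration therefore yields
$$P^{\otimes n}\bigl(\bigl|\bar{\phi}_n - \E_P[\phi]\bigr| > t\bigr) \leq 2\exp\!\bigl(-n\gamma t^2/(2L^2)\bigr).$$
Inverting at level $\delta$ gives $\bigl|\bar{\phi}_n - \E_P[\phi]\bigr|^2 \leq 2L^2 \log(2/\delta)/(n\gamma)$ with probability at least $1-\delta$, which after absorbing numerical constants furnishes the summand $\tfrac{4L^2\log(2/\delta)}{n\gamma}$ in the claimed bound.

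\textbf{Main obstacle.} The real work is the bookkeeping of constants and the $n$-dependence: (i) reading the linear-in-$\delta$ factor $\tfrac{4L^2\log(2/\delta)}{n\gamma}$ out of the natively $\sqrt{\,\cdot\,}$-shaped Bobkov--Ledoux tail, and (ii) pairing it with the right scaling $\lambda$ in Donsker--Varadhan so that the KL piece reads $KL(Q\|P) + \tfrac{L^2}{n\gamma}$ rather than the naive $KL(Q\|P) + \tfrac{L^2}{2\gamma}$ one would get from a single-sample application. A relaxation such as $\sqrt{2ab}\leq a+b$ together with running the entire change-of-measure argument on the product space (so that the $L/\sqrt{n}$-Lipschitz constant of $\bar{f}$ is in force uniformly throughout) is the likely technical glue.
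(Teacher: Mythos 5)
Your skeleton is the same as the paper's: split by the triangle inequality into a bias term $|\E_Q[\phi]-\E_P[\phi]|$ and a fluctuation term $|\E_P[\phi]-\frac{1}{n}\sum_i\phi(X_i)|$, control the first by Donsker--Varadhan applied to $\pm(\phi-\E_P[\phi])$, and control the second by the sub-Gaussianity of Lipschitz functions under a $\gamma$-strongly log-concave $P$ (the paper invokes Theorem 3.16 of Wainwright with proxy $\sigma^2=2L^2/\gamma$ where you invoke Herbst with $L^2/\gamma$; this is only a constant). So the approach matches. However, the two points you defer to the ``main obstacle'' are genuine gaps that your proposal does not close --- and, notably, they are exactly the points at which the paper's own proof is unjustified.

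(i) The combination $KL(Q\|P)+\frac{L^2}{n\gamma}$ is not reachable by any choice of scaling $\lambda$ in the tensorized Donsker--Varadhan inequality. Applying it to $\lambda(\bar f-\E_P[\phi])$ on $(Q^{\otimes n},P^{\otimes n})$ and dividing by $\lambda$ gives $\E_Q[\phi]-\E_P[\phi]\le \frac{n\,KL(Q\|P)}{\lambda}+\frac{\lambda L^2}{n\gamma}$ (with proxy $2L^2/\gamma$); forcing the KL coefficient to $1$ requires $\lambda=n$, which returns the single-sample $KL(Q\|P)+\frac{L^2}{\gamma}$, while forcing the second term to be $O(1/n)$ inflates the first to order $n\,KL(Q\|P)$. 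Optimizing $\lambda$ yields the multiplicative $2\sqrt{KL(Q\|P)\cdot L^2/\gamma}$ shape, never the claimed additive one. The paper instead asserts (``we can easily see'') that one may replace $\phi(X)$ by $\frac{1}{n}\sum_i\phi(X_i)$ inside the exponential moment of the single-sample bound while leaving the divergence term as $KL(Q\|P)$ rather than $KL(Q^{\otimes n}\|P^{\otimes n})=n\,KL(Q\|P)$; since this substitutes a \emph{smaller} quantity (by Jensen, $\E_P[e^{\bar Z}]\le\E_P[e^{Z}]$) into the right-hand side of a valid inequality, it is not a legitimate step, and no justification is supplied. You should not expect to reconstruct it. (ii) Inverting the tail $2\exp(-n\gamma\epsilon^2/(4L^2))\le\delta$ gives $\epsilon=\sqrt{4L^2\log(2/\delta)/(n\gamma)}$; the stated summand $\frac{4L^2\log(2/\delta)}{n\gamma}$ has a different dependence on both $n$ and $\delta$ and cannot be obtained by ``absorbing numerical constants.'' This discrepancy is inherited verbatim from the proof of Proposition~\ref{nonaymp_1}. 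In short: you found the right (i.e., the paper's) route, and the two steps you could not complete are precisely the steps the paper does not actually prove.
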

This result shows that, under the assumption of Proposition \ref{nonaymp_kl}, the empirical mean $\frac{1}{n}\sum^n_{i=1}\phi(X_i)$ over the sampling distribution $P$ aymptotically differs from $\E_{Q}[\phi (X)]$ by $KL(Q\|P)$ at most.

\bibliography{changeofmeasure}
\clearpage
\appendix

\section{Detailed Proofs}
\label{appendix_sec2}
\subsection{Proof of Corollary  \ref{cor_cmi_unconstrainedfdiv}}
\begin{proof}
	Removing the supremum and rearranging the terms in Lemma \ref{lemma_var_rep_f}, we prove our claim.
	\qedhere
\end{proof}

\subsection{Proof of Lemma  \ref{lemma_cmi_constrainedchi2}}
\begin{proof}
	By Theorem \ref{theorem_cmi_fdiv}, we want to find 
	\begin{equation}
	\label{chi_constrainedterm}
	\begin{split}
	(\mathbb{I}^{R}_{f, P})^*(\phi)& =\sup_{p\in \Delta(P)}{\int_{\mathcal{H}} \phi p - (p-1)^2dP}\\
	& = \sup_{p\in \Delta(P)}{\E_P [\phi p - (p-1)^2]}
	\end{split}
	\end{equation}
	where $\phi$ is a measurable function on $\mathcal{H}$.
	In order to find the supremum on the right hand side, we consider the following Lagrangian:
	\begin{equation*}
	\begin{split}
	L(p, \lambda)&= \E_P[\phi p - (p-1)^2] + \lambda (\E_P[p]-1)\\
	&=\int_{\mathcal{H}} \phi p - (p-1)^2 dP + \lambda \bigg(\int_{\mathcal{H}} p dP - 1\bigg)
	\end{split}
	\end{equation*}
	where $\lambda \in \mathbb{R}$ and $p$ is constrained to be a probability density over $\mathcal{H}$. By talking the functional derivative with respect to $p$ and renormalize by dropping the $dP$ factor multiplying all terms and setting it to zero, we have $\frac{\partial L}{\partial p dP} = \phi -2(p-1) + \lambda = 0$.
	Thus, we have $p = \frac{\phi + \lambda}{2} + 1$. Since $p$ is constrained to be $\int_{\mathcal{H}} p dP = 1$, we have $\int_{\mathcal{H}} (\frac{\phi + \lambda}{2} + 1) dP = 1$. Then, we obtain $\lambda= -\E_P[\phi]$ and the optimum $p = \frac{\phi - \E_P[\phi] }{2} + 1$. Plugging it in Equation \eqref{chi_constrainedterm}, we have 
	\begin{equation*}
	\begin{split}
	(\mathbb{I}^{R}_{f, P})^*(\phi) & = {E_P \bigg[\phi \bigg(\frac{\phi - \E_P[\phi] }{2} + 1\bigg) - \bigg(\frac{\phi - \E_P[\phi] }{2}\bigg)^2\bigg]}
	\end{split}
	\end{equation*}
	which simplifies to $\E_P[\phi]+\frac{1}{4}\Var_P[\phi]$. 
	\qedhere
\end{proof}

\subsection{Proof of Lemma  \ref{lemma_cmi_unconstrainchi2}}
\begin{proof}
	Notice that the $\chi^2$-divergence is obtained by setting $f(x) = (x-1)^2$. In order to find the convex conjugate $f^*(y)$ from Equation \eqref{eq_conjugate}, let $g(x,y)=xy-(x-1)^2$. We need to find the supremum of $g(x,y)$ with respect to $x$.  By using differentiation with respect to $x$ and setting the derivative to zero, we have $\frac{\partial g}{\partial x}=y-2(x-1)=0$. Thus, plugging $x=\frac{y}{2}+1$ for $g(x,y)=xy-(x-1)^2$, we obtain $f^*(y) =y+\frac{y^2}{4}$.
	Plugging $f(x)$ and $f^*(y)$ in Corollary \ref{cor_cmi_unconstrainedfdiv}, we prove our claim.
	\qedhere
\end{proof}

\subsection{Proof of Lemma  \ref{lemma_cmi_tv}}
\begin{proof}
	By Theorem \ref{theorem_cmi_fdiv}, we want to find 
	\begin{equation*}
	\begin{split}
	(\mathbb{I}^{R}_{f, P})^*(\phi) & = \sup_{p\in \Delta(P)}{\E_P [\phi p - \frac{1}{2}|p-1|]}
	\end{split}
	\end{equation*}
	In order to find the supremum on the right hand side, we consider the following Lagrangian
	\begin{equation*}
	\begin{split}
	L(p, \lambda)& = \E_P[\phi p - \frac{1}{2}|p-1|)] + \lambda (\E_P[p] - 1)\\
	& = \begin{cases} 
	\E_P[\phi p - \frac{1}{2}p+\frac{1}{2}] + \lambda (\E_P[p] - 1)& \text{if } 1\leq p\\
	\E_P[\phi p + \frac{1}{2}p-\frac{1}{2}] + \lambda (\E_P[p] - 1)&\text{if } 0 < p < 1\\ 
	\end{cases}
	\end{split}
	\end{equation*}
	Then, it is not hard to see that if $|\phi|\leq \frac{1}{2}$,  $L(p, \lambda)$ is maximized at $p=1$, otherwise $L\rightarrow \infty$ as $p\rightarrow1$. Thus, Lemma \ref{lemma_cmi_tv} holds for $|\phi|\leq \frac{1}{2}$. If we add $\frac{1}{2}$ on the both sides, then $\phi$ is bounded between 0 and 1, as we claimed in the corollary.
	\qedhere
\end{proof}

\subsection{Proof of Lemma  \ref{lemma_cmi_unconstrainedalpha}}
\begin{proof} The corresponding convex function $f(x)$ for the $\alpha$-divergence is defined as $f(x)=\frac{x^{\alpha}-1}{\alpha(\alpha-1)}$. Applying the same procedure as in Lemma \ref{lemma_cmi_unconstrainchi2}, we get the convex conjugate $f^*(y)=\frac{(\alpha-1)^{\frac{\alpha}{\alpha-1}}}{\alpha}y^{\frac{\alpha}{\alpha-1}} + \frac{1}{\alpha(\alpha-1)}$ for $\alpha>1$. 
	Plugging $f(x)$ and $f^*(y)$ in Corollary \ref{cor_cmi_unconstrainedfdiv}, we prove our claim.
	\qedhere
\end{proof}

\subsection{Proof of Lemma  \ref{lemma_cmi_unconstrainedhel}}
\begin{proof} The corresponding convex function $f(x)$ for the squared Hellinger divergence is defined as $f(x)=(\sqrt{x}-1)^2$. Let $g(x,y)=xy-(\sqrt{x}-1)^2$. We have $\frac{\partial g}{\partial x}=y-1+\frac{1}{\sqrt{x}}=0$. Since we consider $x>0$, $y<1$. Applying the same procedure as in Lemma \ref{lemma_cmi_unconstrainchi2}, we get the convex conjugate $f^*(y)=\frac{y}{1-y}$. 
	Plugging $f(x)$ and $f^*(y)$ in Corollary \ref{cor_cmi_unconstrainedfdiv}, we prove our claim.
	\qedhere
\end{proof}

\subsection{Proof of Lemma  \ref{lemma_cmi_unconstrainedrevkl}}
\begin{proof}
	
	The corresponding convex function $f(x)$ for the reverse KL-divergence is defined as $f(x)=-\log(x)$. Applying the same procedure as in Lemma \ref{lemma_cmi_unconstrainedhel}, we get the convex conjugate $f^*(y)=\log(-\frac{1}{y})-1$. 
	Plugging $f(x)$ and $f^*(y)$ in Corollary \ref{cor_cmi_unconstrainedfdiv}, we have
	\begin{equation*}
	\E_Q[\psi]\leq \overline{KL}(Q\|P) + \E_P\bigg[\log\bigg(-\frac{1}{\psi}\bigg)\bigg] - 1
	\end{equation*}
	
	where $\psi < 0$. Letting $\psi = \phi - 1$, we prove our claim.
	\qedhere
\end{proof}

\subsection{Proof of Lemma  \ref{lemma_cmi_unconstrainedrevchi2}}
\begin{proof} The corresponding convex function $f(x)$ for the Neyman $\chi^2$-divergence is defined as $f(x)=-\frac{(x-1)^2}{x}$. Applying the same procedure as in Lemma \ref{lemma_cmi_unconstrainedhel}, we get the convex conjugate $f^*(y)=2-2\sqrt{1-y}$. 
	Plugging $f(x)$ and $f^*(y)$ in Corollary \ref{cor_cmi_unconstrainedfdiv}, we prove our claim.
	\qedhere
\end{proof}

\subsection{Proof of Lemma  \ref{lemma_mulalpha}}
\begin{proof}
	\begin{equation*}
	\begin{split}
	\E_Q[\phi] &= \int_{\mathcal{H}} \phi dQ \\
	&= \int_{\mathcal{H}} \phi \frac{dQ}{dP}dP \\
	& \leq \bigg(\int_{\mathcal{H}}\bigg|\frac{dQ}{dP}\bigg|^{\alpha}dP\bigg)^{\frac{1}{\alpha}}\bigg(\int_{\mathcal{H}}|\phi|^{\frac{\alpha}{\alpha-1}}dP\bigg)^{\frac{\alpha-1}{\alpha}}\\
	&= \bigg(\alpha(\alpha-1)D_{\alpha}(Q\|P)+1\bigg)^{\frac{1}{\alpha}}\bigg(\E_P[|\phi|^{\frac{\alpha}{\alpha-1}}]\bigg)^{\frac{\alpha-1}{\alpha}}
	\end{split}
	\end{equation*}
	The third line is due to the H\"{o}lder's inequality.
	\qedhere
\end{proof}

\subsection{Proof of Lemma  \ref{lemma_generalhcr}}
\begin{proof}
	Consider the covariance of $\phi$ and $\frac{dQ}{dP}$. 
	\begin{equation*}
	\begin{split}
	\bigg|Cov_P\bigg(\phi, \frac{dQ}{dP}\bigg) \bigg|&= \bigg| \int_{\mathcal{H}}\phi\frac{dQ}{dP}dP - \int_{\mathcal{H}}\phi dP\int_{\mathcal{H}}\frac{dQ}{dP}dP\bigg|\\
	&=\bigg|\E_Q[\phi]-\E_P[\phi]\bigg|
	\end{split}
	\end{equation*}
	On the other hand,
	\begin{equation*}
	\begin{split}
	&\bigg|Cov_P\bigg(\phi, \frac{dQ}{dP}\bigg)\bigg|\\
	&=\bigg|\E_P\bigg[(\phi -\mu_P)\bigg(\frac{dQ}{dP}-\E_P\bigg[\frac{dQ}{dP}\bigg]\bigg)\bigg]\bigg|\\
	&\leq\E_P\bigg[\bigg|(\phi -\mu_P)\bigg(\frac{dQ}{dP}-\E_P\bigg[\frac{dQ}{dP}\bigg]\bigg)\bigg|\bigg]\\
	&=\int_{\mathcal{H}}\bigg|(\phi -\mu_P)\bigg(\frac{dQ}{dP}-1\bigg)\bigg|dP\\
	&\leq\bigg(\int_{\mathcal{H}}\bigg|\frac{dQ}{dP}-1\bigg|^{\alpha}dP\bigg)^{\frac{1}{\alpha}}\bigg(\int_{\mathcal{H}}|\phi-\mu_P|^{\frac{\alpha}{\alpha-1}}dP\bigg)^{\frac{\alpha-1}{\alpha}}\\
	&\leq \widetilde{\mathcal{D_{\alpha}}}(Q\|P)^{\frac{1}{\alpha}}\bigg(\E_P[|\phi-\mu_P|^{\frac{\alpha}{\alpha-1}}]\bigg)^{\frac{\alpha-1}{\alpha}}
	\end{split}
	\end{equation*}
	which proves our claim.
	\qedhere
\end{proof}

\subsection{Proof of Proposition  \ref{prop_pac_boundedvar}}
\begin{proof}
	Suppose the  convex function $\Delta: \R\times\R \rightarrow \R$ is defined as in Proposition \ref{prop_boundloss}. By Chebyshev's inequality, for any $\epsilon > 0$ and any $h \in \mathcal{H}$,
	\begin{equation*}
	\begin{split}
	&\mathop{Pr}_{(x,y) \sim D}(\phi_D(h)\geq\epsilon) \\ 
	&=\mathop{Pr}_{(x,y) \sim D}\bigg((R_S(h)-R_D(h))^2\geq\frac{\epsilon}{t}\bigg) \\
	&=\mathop{Pr}_{(x,y) \sim D}\bigg(\bigg|\frac{1}{m} \sum_{i=1}^{m} \ell(h(x_i), y_i)-\mathop{\E}_{(x,y) \sim D} \ell(h(x), y)\bigg|\geq\sqrt{\frac{\epsilon}{t}}\bigg) \\
	& \leq \frac{t\sigma^2}{m\epsilon}
	\end{split}
	\end{equation*}
	Setting $\delta=\frac{t\sigma^2}{m\epsilon}$, we have
	\begin{equation*}
	\begin{split}
	&\phi_D(h) \leqdel \frac{t\sigma^2}{m\delta}
	\end{split}
	\end{equation*}
	Now, we have the upper bound for $\phi_D(h)$ so we can apply the same procedure as in Proposition \ref{prop_boundloss}.
	\qedhere
\end{proof} 

\subsection{Proof of Proposition  \ref{nonaymp_1}}
\begin{proof}
	Note that we have Lemma \ref{lemma_generalhcr}. Now, it remains to bound $\E_P[\phi(X)]$ and $\E_P[|\phi(X)-\E_P[\phi(X)]|^{\frac{\alpha}{\alpha-1}}]$. Since $P$ is a strongly log-concave distribution and $\phi$ is an L-Lipschitz function, Theorem 3.16 in \cite{wainwright_2019} implies $\phi(X)-\E_P[\phi(X)]$ is a sub-Gaussian random variable with parameter $\sigma^2=\frac{2L^2}{\gamma}$. Therefore, we have
	\begin{equation*}
	\begin{split}
	Pr\bigg(\bigg|\frac{1}{n}\sum^n_{i=1}\phi(X_i)-\E_P[\phi (X)]\bigg| \geq \epsilon \bigg) \leq 2e^{\frac{n\gamma \epsilon^2}{4L^2}}
	\end{split}
	\end{equation*}
	Thus, 
	\begin{equation*}
	\bigg| \E_P[\phi (X)] - \frac{1}{n}\sum^n_{i=1}\phi(X_i)\bigg|  \leqdel  \frac{4L^2}{n \gamma}\log(\frac{2}{\delta})
	\end{equation*}
	On the other hand, letting $U=\phi(X) -\E_P[\phi(X)]$ , then we have
	\begin{equation*}
	\begin{split}
	\E_{P}[|U|^{\frac{\alpha}{\alpha-1}}] & = \int_0^{\infty}Pr\{|U|^{\frac{\alpha}{\alpha-1}}>u\}du \\
	& = \int_0^{\infty}Pr\{|U|>u^{\frac{\alpha-1}{\alpha}}\}du \\
	&= \frac{\alpha}{\alpha-1}\int_0^{\infty}u^{\frac{1}{\alpha-1}}Pr\{|U|>u\}du\\
	&= \frac{\alpha}{\alpha-1}\int_0^{\infty}u^{\frac{1}{\alpha-1}}Pr\{|\phi(X) -\E_P[\phi(X)]|>u\}du\\
	&\leq \frac{\alpha}{\alpha-1}\int_0^{\infty}u^{\frac{1}{\alpha-1}}2e^{-\frac{\gamma u^2}{4L^2}}du\\
	&= \frac{\alpha}{\alpha-1}\int_0^{\infty}u^{\frac{\alpha}{2(\alpha-1)}-1}e^{-\frac{\gamma u}{4 L^2}}du\\
	&= \frac{\alpha}{\alpha-1}\Gamma\bigg(\frac{\alpha}{2(\alpha-1)}\bigg) \bigg(\frac{4 L^2}{\gamma}\bigg)^{\frac{\alpha}{2(\alpha-1)}}\\
	&= 2\Gamma\bigg(\frac{3\alpha-2}{2(\alpha-1)}\bigg) \bigg(\frac{4 L^2}{\gamma}\bigg)^{\frac{\alpha}{2(\alpha-1)}}\\
	\end{split}
	\end{equation*}
	The inequality is due to sub-Gaussianity. Putting everything together with Lemma \ref{lemma_generalhcr}, we prove our claim.
	\qedhere
\end{proof}

\subsection{Proof of Proposition  \ref{nonaymp_2}}
\begin{proof} By Lemma \ref{lemma_hcr}, we have
	\begin{equation*}
	\begin{split}
	|E_Q[\phi(X)] - \E_P[\phi(X)]| \leq \sqrt{\chi^2(Q\|P) \E_P[\phi^2(X)]}
	\end{split}
	\end{equation*}
	where $\E_P[\phi(X)]$ is bounded as in Proposition \ref{nonaymp_1}.  
	\begin{equation*}
	\bigg| \E_P[\phi (X)] - \frac{1}{n}\sum^n_{i=1}\phi(X_i)\bigg|  \leqdel  \frac{4L^2}{n \gamma}\log(\frac{2}{\delta})
	\end{equation*}
	It remains to bound $E_P[\phi^2(X)]$. Note that $U^2$ is a sub-exponential random variable with parameters $(4\sqrt{2}\sigma^2, 4\sigma^2)$ when $U$ is a sub-Gaussian random variable with a parameter $\sigma$ (See Appendix B in \cite{honorio} for details). Therefore,
	\begin{equation*}
	\begin{split}
	Pr\bigg(\bigg|\frac{1}{n}\sum^n_{i=1}\phi^2(X_i)-\E_P[\phi^2 (X)]\bigg|  \geq \epsilon \bigg) \leq
	\begin{cases} 
	2e^{-\frac{n\gamma^2 \epsilon^2}{256L^4}} \quad \text{if } \log \frac{2}{\delta} \leq n\\
	2e^{-\frac{n\gamma^2 \epsilon^2}{256L^4}} \quad \text{if } \log \frac{2}{\delta} > n
	\end{cases} 
	\end{split}
	\end{equation*}
	Thus, 
	\begin{equation*}
	\begin{split}
	\bigg| \E_P[\phi^2 (X)] - \frac{1}{n}\sum^n_{i=1}\phi^2(X_i)\bigg|  \leqdel  
	\begin{cases}
	\frac {16L^2}{n}\sqrt{\frac{1}{n}\log(\frac{2}{\delta})}  \quad \text{if } \log \frac{2}{\delta} \leq n\\
	\frac {16L^2}{n \gamma}\log(\frac{2}{\delta}) \quad \text{if } \log \frac{2}{\delta} > n
	\end{cases}
	\end{split}
	\end{equation*}
	Putting everything together, we prove our claim.
	\qedhere
\end{proof}

\subsection{Proof of Proposition  \ref{nonaymp_kl}}
\begin{proof} 	Note that we have the following change of measure inequality for any function $\psi:\R \rightarrow \R$ from the Donsker-Varadhan representation for the KL-divergence.
	\begin{equation}
	\label{eq_donsker_varadhan}
	\E_Q[\psi(X)]\leq  KL(Q\|P)+\log(\E_P[e^{\psi(X)}])
	\end{equation}
	Suppose $\psi_1:\R \rightarrow \R$ satisfies the following condition.
	\begin{equation*}
	\psi_1(X) = \phi(X) - \E_P[\phi(X)]
	\end{equation*}
	By plugging in Equation \eqref{eq_donsker_varadhan}, we have 
	\begin{equation*}
	\begin{split}
	\E_Q[\phi(X)]-\E_P[\phi(X)] & \leq KL(Q\|P)+\log(\E_P[e^{\phi(X)-\E_P[\phi(X)]}])
	\end{split}
	\end{equation*}
	On the other hand, suppose $\psi_2:\R \rightarrow \R$ satisfies the following condition.
	\begin{equation*}
	\psi_2(X) =  \E_P[\phi(X)] - \phi(X)
	\end{equation*}
	By plugging in Equation \eqref{eq_donsker_varadhan}, we have 
	\begin{equation*}
	\begin{split}
	\E_P[\phi(X)]-\E_Q[\phi(X)] & \leq KL(Q\|P)+\log(\E_P[e^{\E_P[\phi(X)]-\phi(X)}])
	\end{split}
	\end{equation*}
	By putting all together, we have
	\begin{equation*}
	\begin{split}
	&|\E_Q[\phi(X)] - \E_P[\phi(X)]| \leq  KL(Q\|P) + \log(\E_P[e^{\phi(X)-\E_P[\phi(X)]}]) \vee \log(\E_P[e^{\E_P[\phi(X)]-\phi(X)}])
	\end{split}
	\end{equation*}
	Suppose $X_1, X_2, \dots, X_n$ are i.i.d. random variables.We can easily see that we have
	\begin{equation}
	\label{eq_kl_mc1}
	\begin{split}
	|\E_Q[\phi(X)] - \E_P[\phi(X)]| & \leq  KL(Q\|P) \\
	&+ \log(\E_P[e^{\frac{1}{n}\sum_{i=1}^{n}\phi(X_i)-\E_P[\phi(X)]}]) \vee \log(\E_P[e^{\E_P[\phi(X)]-\frac{1}{n}\sum_{i=1}^{n}\phi(X_i)}])
	\end{split}
	\end{equation}
	$\E_P[\phi(X)]$ is bounded as in Proposition \ref{nonaymp_1}.  
	\begin{equation}
	\label{eq_kl_mc2}
	\bigg| \E_P[\phi (X)] - \frac{1}{n}\sum^n_{i=1}\phi(X_i)\bigg|  \leqdel  \frac{4L^2}{n \gamma}\log(\frac{2}{\delta})
	\end{equation}
	As shown in Proposition \ref{nonaymp_1},  $\phi(X)-\E_P[\phi(X)]$ is a sub-Gaussian random variable with parameter $\sigma^2=\frac{2L^2}{\gamma}$. Therefore, by Definition \ref{def_subgaussian},
	\begin{equation}
	\label{eq_kl_mc3}
	\log(\E_P[e^{\frac{1}{n}\sum_{i=1}^{n}\phi(X_i)-\E_P[\phi(X)]}]) \vee \log(\E_P[e^{\E_P[\phi(X)]-\frac{1}{n}\sum_{i=1}^{n}\phi(X_i)}])  \leq  \frac{L^2}{n\gamma}
	\end{equation}
	since both values in the maximum are bounded by $\frac{L^2}{n\gamma}$.
	By \eqref{eq_kl_mc1}, \eqref{eq_kl_mc2} and \eqref{eq_kl_mc3}, we prove our claim.
	\qedhere
\end{proof}

\section{Pseudo $\alpha$-divergence is a member of the family of $f$-divergences.}
\label{appendix_alpha_is_f}
\begin{proof}
	Let $f(t)=|t-1|^{\alpha}$ and $\lambda\in[0,1]$. Let $\frac{1}{\alpha}+\frac{1}{\beta}=1$ for $\alpha \geq 1$.
	\begin{equation*}
	\begin{split}
	&f(\lambda x + (1-\lambda) y) \\
	&= |\lambda x + (1-\lambda) y - 1|^{\alpha}\\
	&=|\lambda (x-1) + (1-\lambda)(y - 1)|^{\alpha}\\
	&\leq \{\lambda|x-1| + (1-\lambda)|y - 1|\}^{\alpha}\\
	&= \{\lambda^{\frac{1}{\alpha}}|x-1|\lambda^{\frac{1}{\beta}} + (1-\lambda)^{\frac{1}{\alpha}}|y - 1|(1-\lambda)^{\frac{1}{\beta}}\}^{\alpha}\\
	&\leq (\lambda|x-1|^{\alpha}+(1-\lambda)|y-1|^{\alpha})(\lambda +(1-\lambda))^{\frac{\alpha}{\beta}}\\
	&= \lambda|x-1|^{\alpha}+(1-\lambda)|y-1|^{\alpha}\\
	&=\lambda f(x) +(1-\lambda) f(y)
	\end{split}
	\end{equation*}
	The first inequality is due to the triangle inequality and the second inequality is due to H\"{o}lder's inequality. By the definition of convexity, $f(t)$ is convex. Also, $f(1)=0$. By the definition of $f$-divergence, we prove our claim.
	\qedhere
\end{proof}

\section{PAC-Bayesian bounds for bounded, sub-Gaussian and sub-exponential losses}
\label{appendix_pac_bayes}
Here we complement our results in Section \ref{pacbayesianbounds},  by showing our PAC-Bayesian  generalization bounds for bounded, sub-Gaussian and sub-exponential losses.

\subsection{Bounded Loss Function}
\label{sub_boundedloss}
First, let us assume here that the loss function is bounded, i.e., for any $h \in \mathcal{H}$ and \((x,y) \in \mathcal{X} \times \mathcal{Y}\), $\ell(h(x),y) \in [0,R]$ for $R >0 $. Note that, for $R>1$, we cannot use the total variation, squared Hellinger, Reverse KL and Neyman $\chi^2$ divergence because $\phi$ is constrained to be in $[0,1]$.
\begin{proposition}[The PAC-Bayesian bounds for bounded loss function]
	\label{prop_boundloss}
	Let $P$ be any prior distribution over an infinite hypothesis space $\mathcal{H}$. For a given posterior distribution $Q$ over an infinite hypothesis space $\mathcal{H}$, let $R_D(G_Q)$ and $R_S(G_Q)$ be the Gibbs risk and the empirical Gibbs risk as in Equation \eqref{gibbsrisk} and \eqref{gibbsempiricalrisk} respectively. 
	For the sample size $m>0$ and $\alpha > 1$, with probability at least $1-\delta$, simultaneously for all posterior distributions $Q$, we have
	\begin{equation}
	\label{prop_bound1}
	\begin{split}
	R_D(G_Q)  \leq R_S(G_Q)+\sqrt{\frac{R^2}{2m}\log(\frac{2}{\delta})\big(\alpha(\alpha-1)D_{\alpha}(Q\|P)+1\big)^{\frac{1}{\alpha}}}
	\end{split}
	\end{equation}
	\begin{equation}
	\label{prop_bound2}
	\begin{split}
	R_D(G_Q)  \leq R_S(G_Q)+\sqrt{\frac{1}{m}\bigg(D_{\alpha}(Q\|P)+\frac{1}{\alpha(\alpha-1)}\bigg)+\frac{1}{m\alpha}\bigg(\frac{R^2(\alpha-1)}{2}\log(\frac{2}{\delta})\bigg)^{\frac{\alpha}{\alpha-1}}}
	\end{split}
	\end{equation}	
\end{proposition}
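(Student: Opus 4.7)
I would follow the same four-step template used for the bounded-variance case (Proposition \ref{prop_pac_boundedvar}), swapping Chebyshev for Hoeffding. Fix a scaling parameter $t > 0$ and set $\Delta(q,p) = t(q-p)^2$ and $\phi_D(h) = \Delta(R_S(h),R_D(h)) = t(R_S(h)-R_D(h))^2$. Since $\Delta$ is jointly convex and the Gibbs risk satisfies $R_D(G_Q) = \E_Q[R_D(h)]$ and $R_S(G_Q) = \E_Q[R_S(h)]$ (by Fubini on the definition of Gibbs risk), Jensen's inequality yields $t(R_D(G_Q)-R_S(G_Q))^2 \leq \E_Q[\phi_D(h)]$. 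Next, for each fixed $h \in \mathcal{H}$ the boundedness $\ell(h(x),y) \in [0,R]$ feeds into Hoeffding's inequality to give $\Pr_S(|R_S(h)-R_D(h)| \geq \epsilon) \leq 2\exp(-2m\epsilon^2/R^2)$; inverting with $\delta = 2\exp(-2m\epsilon^2/R^2)$ produces the per-$h$ concentration $\phi_D(h) \leqdel tR^2\log(2/\delta)/(2m) =: c$.

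For bound \eqref{prop_bound1} I apply the multiplicative $\alpha$-divergence CMI (Lemma \ref{lemma_mulalpha}) to $\phi_D$, obtaining $\E_Q[\phi_D] \leq (\alpha(\alpha-1)D_\alpha(Q\|P)+1)^{1/\alpha}(\E_P[\phi_D^{\alpha/(\alpha-1)}])^{(\alpha-1)/\alpha}$. Because $P$ is chosen before observing $S$, the per-$h$ bound $\phi_D(h) \leq c$ can be pulled under $\E_P$, giving $(\E_P[\phi_D^{\alpha/(\alpha-1)}])^{(\alpha-1)/\alpha} \leq c$. Chaining this with the Jensen step and noting that the auxiliary $t$ cancels yields $(R_D(G_Q)-R_S(G_Q))^2 \leq \frac{R^2\log(2/\delta)}{2m}(\alpha(\alpha-1)D_\alpha(Q\|P)+1)^{1/\alpha}$, whose square root is exactly \eqref{prop_bound1}.

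For bound \eqref{prop_bound2} I swap in the additive (unconstrained) $\alpha$-divergence CMI (Lemma \ref{lemma_cmi_unconstrainedalpha}), which gives $\E_Q[\phi_D] \leq D_\alpha(Q\|P) + \frac{(\alpha-1)^{\alpha/(\alpha-1)}}{\alpha}\E_P[\phi_D^{\alpha/(\alpha-1)}] + \frac{1}{\alpha(\alpha-1)}$. Again substituting $\E_P[\phi_D^{\alpha/(\alpha-1)}] \leq c^{\alpha/(\alpha-1)}$, dividing through by $t$, and choosing $t=m$ collapses the mixed power $t^{1/(\alpha-1)} m^{-\alpha/(\alpha-1)}$ to $1/m$ and recovers the target expression $\frac{1}{m}(D_\alpha(Q\|P)+\frac{1}{\alpha(\alpha-1)}) + \frac{1}{m\alpha}(\frac{R^2(\alpha-1)}{2}\log(2/\delta))^{\alpha/(\alpha-1)}$ inside the square root.

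The main obstacle is the step $\E_P[\phi_D(h)^{\alpha/(\alpha-1)}] \leq c^{\alpha/(\alpha-1)}$: Hoeffding only certifies $\phi_D(h) \leq c$ with probability at least $1-\delta$ for each individual $h$, not uniformly across the support of $P$. The justification relies on the standard PAC-Bayes device that $P$ is fixed independently of $S$, so a Fubini-style exchange lets one treat the per-$h$ confidence interval as effectively valid under $\E_P$ without paying an extra $\delta$-cost; making this precise is the delicate point of the argument. Once it is granted, the remainder — the Jensen/CMI chaining and the algebraic choice $t=m$ for the additive bound — is essentially plug-and-play.
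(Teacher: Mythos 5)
Your proposal is correct and follows essentially the same route as the paper's proof: the same Jensen step with $\Delta(q,p)=(q-p)^2$, the same Hoeffding inversion giving $\phi_D(h) \leqdel \frac{tR^2}{2m}\log(\frac{2}{\delta})$, Lemma \ref{lemma_mulalpha} for \eqref{prop_bound1}, and Lemma \ref{lemma_cmi_unconstrainedalpha} with $t=m$ for \eqref{prop_bound2}. The delicate exchange you flag---pulling the per-$h$ confidence bound under $\E_P$ without an extra union-bound cost---is precisely what the paper addresses with the sup-and-swap step in Equation \eqref{momentbound}.
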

\begin{proof}
	Suppose that we have a convex function $\Delta: \R\times\R \rightarrow \R$, that measures the discrepancy between the observed empirical Gibbs risk $R_S(G_Q)$ and the true Gibbs risk $R_D(G_Q)$ on distribution $Q$. Given that, the purpose of the PAC-Bayesian theorem is to upper-bound the discrepancy $t\Delta(R_D(G_Q), R_S(G_Q))$ for any $t>0$. 
	Let $\phi_D(h):=t\Delta(R_D(h), R_S(h))$, where the subscript of $\phi_D$ shows the dependency on the data distribution $D$. Let $\Delta(q,p)=(q-p)^2$.
	By applying Jensen's inequality on convex function $\Delta$ for the first step,
	\begin{equation}
	\label{jensenrisk}
	\begin{split}
	t\Delta(R_D(G_Q), R_S(G_Q)) &= t\Delta(\mathop{\mathbb{E}}_{h \sim Q}R_D(h), \mathop{\mathbb{E}}_{h \sim Q}R_S(h))\\
	&\leq \mathop{\mathbb{E}}_{h \sim Q}t\Delta(R_D(h), R_S(h))\\
	&= \mathop{\mathbb{E}}_{h \sim Q}\phi_D(h)
	\end{split}
	\end{equation}
	where  $\phi_D(h) = t\Delta(R_D(h), R_S(h)) = t(R_S(h)-R_D(h))^2$.
	By Hoeffding's inequality, for any $\epsilon > 0$ and any $h \in \mathcal{H}$,
	\begin{equation*}
	\begin{split}
	&\mathop{Pr}_{(x,y) \sim D}(\phi_D(h)\geq\epsilon) \\ 
	&=\mathop{Pr}_{(x,y) \sim D}\bigg(\big(R_S(h)-R_D(h)\big)^2\geq\frac{\epsilon}{t}\bigg) \\
	&=\mathop{Pr}_{(x,y) \sim D}\bigg(\bigg|\frac{1}{m} \sum_{i=1}^{m} \ell(h(x_i), y_i)-\mathop{\E}_{(x,y) \sim D} \ell(h(x), y)\bigg|\geq\sqrt{\frac{\epsilon}{t}}\bigg) \\
	& \leq 2e^{-\frac{2m\epsilon}{R^2t}}
	\end{split}
	\end{equation*}
	Setting $\delta = 2e^{-\frac{2m\epsilon}{R^2t}}$, we have
	\begin{equation*}
	\begin{split}
	\phi_D(h) \leqdel \frac{tR^2}{2m}\log(\frac{2}{\delta})
	\end{split}
	\end{equation*}
	The symbol \(\displaystyle{\leqdel}\) denotes that the inequality holds with probability at least $1-\delta$. The second line holds due to the Hoeffding's inequality. For $\alpha>1$, we have
	\begin{equation*}
	\begin{split}
	(\phi_D(h))^{\frac{\alpha}{\alpha-1}} \leqdel \bigg(\frac{tR^2}{2m}\log(\frac{2}{\delta})\bigg)^{\frac{\alpha}{\alpha-1}}
	\end{split}
	\end{equation*}
	Also note that 
	\begin{equation}
	\label{momentbound}
	\begin{split}
	\mathop{\E}_{h\sim P}[(\phi_D(h))^{\frac{\alpha}{\alpha-1}}] &\leq \sup_{(x,y) \sim D}\bigg\{\mathop{\E}_{h\sim P}[(\phi_D(h))^{\frac{\alpha}{\alpha-1}}]\bigg\} \\
	&\leq \mathop{\E}_{h\sim P}\bigg[\sup_{(x,y) \sim D}(\phi_D(h))^{\frac{\alpha}{\alpha-1}}\bigg] \\ 
	&\leqdel \bigg(\frac{tR^2}{2m}\log(\frac{2}{\delta})\bigg)^{\frac{\alpha}{\alpha-1}}
	\end{split}
	\end{equation}
	By applying Equations \eqref{jensenrisk} and \eqref{momentbound} to Lemma \ref{lemma_mulalpha},
	\begin{equation*}
	\begin{split}
	t(R_D(G_Q)-R_S(G_Q))^2 \leqdel  \frac{tR^2}{2m}\log(\frac{2}{\delta})\bigg(\alpha(\alpha-1)D_{\alpha}(Q\|P)+1\bigg)^{\frac{1}{\alpha}}
	\end{split}
	\end{equation*}
	which proves Equation \eqref{prop_bound1}. By applying Equations \eqref{jensenrisk} and \eqref{momentbound} to Lemma \ref{lemma_cmi_unconstrainedalpha} and setting $t=m$, we prove Equation \eqref{prop_bound2}.
	\qedhere
\end{proof}

Equation \eqref{prop_bound1} has a tighter bound than Proposition 2 in \cite{Alquier2018}. Also, Equation \eqref{prop_bound2} has a novel expression of PAC-Bayesian theorem with $\chi^2$-divergence. The same arguments apply to the bounds in Proposition \ref{prop_pac_subgauss}, \ref{prop_pac_subexp} and \ref{prop_pac_boundedvar}.

The following corollary is an immediate consequence of this proposition for $\alpha=2$.
\begin{corollary}[The PAC-Bayesian bounds with $\chi^2$-divergence for bounded loss function]
	\label{cor_pac_chi_bounded_loss}
	Let $P$ be any prior distribution over an infinite hypothesis space $\mathcal{H}$. For a given posterior distribution $Q$ over an infinite hypothesis space $\mathcal{H}$, let $R_D(G_Q)$ and $R_S(G_Q)$ be the Gibbs risk and the empirical Gibbs risk as in Equation \eqref{gibbsrisk} and \eqref{gibbsempiricalrisk} respectively. 
	For the sample size $m>0$, with probability at least $1-\delta$, simultaneously for all posterior distributions $Q$, we have
	\begin{equation}
	\label{cor_bounded_loss_1}
	\begin{split}
	R_D(G_Q) &  \leq  R_S(G_Q) + \sqrt{\frac{R^2}{2m}\log(\frac{2}{\delta})\sqrt{\chi^2(Q\|P)+1}}
	\end{split}
	\end{equation}	
	\begin{equation}
	\label{cor_bounded_loss_2}
	\begin{split}
	R_D(G_Q)  \leq  R_S(G_Q) + \sqrt{\frac{1}{2m}\bigg(\chi^2(Q\|P)+1+\bigg(\frac{R^2}{2}\log(\frac{2}{\delta})\bigg)^2\bigg)}
	\end{split}
	\end{equation}
\end{corollary}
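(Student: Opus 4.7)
The plan is to specialize Proposition \ref{prop_boundloss} to $\alpha = 2$ and rewrite the resulting expressions in terms of the Pearson $\chi^2$-divergence. The only substantive step is establishing the identity linking $D_2(Q\|P)$ to $\chi^2(Q\|P)$; after that, both claimed bounds follow by direct algebraic simplification of \eqref{prop_bound1} and \eqref{prop_bound2}.

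First I would derive the relation between the two divergences from their definitions in Table \ref{tab:div}. Expanding the square,
\begin{equation*}
\chi^2(Q\|P) = \int_{\mathcal{H}}\left(\frac{dQ}{dP}-1\right)^{2}dP = \int_{\mathcal{H}}\left(\frac{dQ}{dP}\right)^{2}dP - 2\int_{\mathcal{H}}\frac{dQ}{dP}\,dP + 1 = \int_{\mathcal{H}}\left(\frac{dQ}{dP}\right)^{2}dP - 1,
\end{equation*}
using $\int_{\mathcal{H}}\frac{dQ}{dP}\,dP = 1$. On the other hand, Table \ref{tab:div} gives $D_{2}(Q\|P) = \tfrac{1}{2}\int_{\mathcal{H}}((dQ/dP)^{2}-1)\,dP$. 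Combining these yields $D_{2}(Q\|P) = \tfrac{1}{2}\chi^{2}(Q\|P)$, so in particular $\alpha(\alpha-1)D_{\alpha}(Q\|P)+1 = \chi^{2}(Q\|P)+1$ at $\alpha=2$.

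Next I would substitute $\alpha=2$ into \eqref{prop_bound1}: the factor $(\alpha(\alpha-1)D_{\alpha}(Q\|P)+1)^{1/\alpha}$ becomes $\sqrt{\chi^{2}(Q\|P)+1}$, recovering \eqref{cor_bounded_loss_1}. For \eqref{prop_bound2}, the term under the square root becomes $\tfrac{1}{m}(\tfrac{1}{2}\chi^{2}(Q\|P)+\tfrac{1}{2}) + \tfrac{1}{2m}(\tfrac{R^{2}}{2}\log(\tfrac{2}{\delta}))^{2}$, which factors as $\tfrac{1}{2m}(\chi^{2}(Q\|P)+1+(\tfrac{R^{2}}{2}\log(\tfrac{2}{\delta}))^{2})$, matching \eqref{cor_bounded_loss_2}.

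Since the corollary is a direct specialization of an already-proven proposition, there is no genuine obstacle. The only point that requires care is keeping track of the $\tfrac{1}{\alpha(\alpha-1)}$ normalization in the $\alpha$-divergence so as to correctly obtain $D_{2} = \tfrac{1}{2}\chi^{2}$ rather than $D_{2} = \chi^{2}$; once this is in hand, the algebra is routine.
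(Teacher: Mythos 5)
Your proposal is correct and matches the paper's own route: the paper likewise presents Corollary \ref{cor_pac_chi_bounded_loss} as the immediate specialization of Proposition \ref{prop_boundloss} to $\alpha=2$, using the identity $\chi^2(Q\|P)=2D_{2}(Q\|P)$. Your explicit verification of that identity and of the algebra turning \eqref{prop_bound1} and \eqref{prop_bound2} into \eqref{cor_bounded_loss_1} and \eqref{cor_bounded_loss_2} is exactly the detail the paper leaves implicit.
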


These are novel PAC-Bayesian bounds for bounded loss functions based on $\chi^2$-divergence. Most PAC-Bayesian bounds for bounded loss functions are based on the KL-divergence for the complexity term (e.g., \cite{Catoni2007, Seeger:2003:PGE:944919.944929, McAllester98somepac-bayesian}). \cite{honorio} and \cite{LucBegin} contain bounds for bounded loss functions with $\chi^2$-divergence. 
Compared to their bounds, the bound \eqref{cor_bounded_loss_1} is tighter due to the power of $\frac{1}{4}$ on the complexity term and $\log(\frac{1}{\delta})$ instead of $\frac{1}{\delta}$. Also, PAC-Bayes bound \eqref{cor_bounded_loss_2} has a unique characteristics; $\chi^2(Q\|P)$ and $\frac{1}{\delta}$ are independent since $\chi^2(Q\|P)$ is not multiplied by a factor of $\frac{1}{\delta}$. Please see Table \ref{tab:3} for details.  

\subsection{Sub-Gaussian Loss Function}
In some contexts, such as regression, considering bounded loss functions is restrictive. Next, we relax the restrictions on the loss function to deal with unbounded losses. We assume that, for any $h \in \mathcal{H}$,  $\ell(h(x), y)$ is \textit{sub-Gaussian}.
First, we mention the definition of sub-Gaussian random variable \cite{wainwright_2019}.
\begin{definition} 
	\label{def_subgaussian}
	A random variable $Z$ is said to be sub-Gaussian with the expectation $\E[Z]=\mu$ and variance proxy $\sigma^2$ if for any $\lambda \in \R$,
	\begin{equation*}
	\begin{split}
	\E[e^{\lambda(Z-\mu)}] \leq e^{\frac{\lambda^2 \sigma^2}{2}}
	\end{split}
	\end{equation*}
\end{definition}

Next, we present our PAC-Bayesian bounds.
\begin{proposition}[The PAC-Bayesian bounds for sub-Gaussian loss function] 
	\label{prop_pac_subgauss}
	Let $P$ be a fixed prior distribution over an infinite hypothesis space $\mathcal{H}$. For a given posterior distribution $Q$ over an infinite hypothesis space $\mathcal{H}$, let $R_D(G_Q)$ and $R_S(G_Q)$ be the Gibbs risk and the empirical Gibbs risk as in Equation \eqref{gibbsrisk} and \eqref{gibbsempiricalrisk} respectively. 
	For the sample size $m$ and $\alpha > 1$, with probability at least $1-\delta$, simultaneously for all posterior distributions $Q$, we have
	\begin{equation*}
	\begin{split}
	R_D(G_Q)  \leq R_S(G_Q)+\sqrt{\frac{2\sigma^2}{m}\log(\frac{2}{\delta})\big(\alpha(\alpha-1)D_{\alpha}(Q\|P)+1\big)^{\frac{1}{\alpha}}}
	\end{split}
	\end{equation*}
	\begin{equation*}
	\begin{split}
	R_D(G_Q)  \leq R_S(G_Q)+\sqrt{\frac{1}{m}\bigg(D_{\alpha}(Q\|P)+\frac{1}{\alpha(\alpha-1)}\bigg)+\frac{1}{m\alpha}\bigg(2\sigma^2(\alpha-1)\log(\frac{2}{\delta})\bigg)^{\frac{\alpha}{\alpha-1}}}
	\end{split}
	\end{equation*}
\end{proposition}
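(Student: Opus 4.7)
The plan is to closely mirror the structure of the proof of Proposition \ref{prop_boundloss}, replacing only the concentration step. As there, define $\phi_D(h) := t\,\Delta(R_D(h), R_S(h)) = t\,(R_S(h) - R_D(h))^2$ with $t > 0$, and apply Jensen's inequality to the convex $\Delta$ to obtain
\begin{equation*}
t\,\Delta(R_D(G_Q), R_S(G_Q)) \leq \E_{h \sim Q}[\phi_D(h)].
\end{equation*}
This reduces the problem to bounding $\E_{h \sim Q}[\phi_D(h)]$ via a change-of-measure inequality from $Q$ to $P$, together with a high-probability bound on $\phi_D$ under $P$.

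Next, I would invoke the sub-Gaussian assumption. Since $\ell(h(x), y)$ is sub-Gaussian with variance proxy $\sigma^2$ under $D$, the empirical mean $R_S(h)$ is sub-Gaussian around $R_D(h)$ with parameter $\sigma^2/m$ (a standard consequence of i.i.d.\ averaging, using the moment generating function condition in Definition \ref{def_subgaussian}). This yields the concentration inequality
\begin{equation*}
\mathop{Pr}_{(x,y)\sim D}\bigl(|R_S(h) - R_D(h)| \geq \eta\bigr) \leq 2\exp\!\left(-\frac{m\eta^2}{2\sigma^2}\right),
\end{equation*}
which, by the same $\delta \mapsto \eta$ inversion used for Hoeffding's inequality in Proposition \ref{prop_boundloss}, gives $\phi_D(h) \leqdel \frac{2\sigma^2 t}{m}\log(2/\delta)$ for every fixed $h$.

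From here I would transfer this bound to the $(\alpha/(\alpha-1))$-th moment under $P$ using exactly the sup-swap argument of Equation \eqref{momentbound} in Proposition \ref{prop_boundloss}, obtaining
\begin{equation*}
\E_{h\sim P}\bigl[\phi_D(h)^{\frac{\alpha}{\alpha-1}}\bigr] \leqdel \Bigl(\tfrac{2\sigma^2 t}{m}\log(2/\delta)\Bigr)^{\frac{\alpha}{\alpha-1}}.
\end{equation*}
Then the first claim follows by plugging this into the multiplicative change-of-measure inequality (Lemma \ref{lemma_mulalpha}), dividing through by $t$, and taking square roots (the $t$ cancels). The second claim follows by using the additive unconstrained-$\alpha$ change of measure (Lemma \ref{lemma_cmi_unconstrainedalpha}) instead, choosing $t = m$, simplifying $\frac{(\alpha-1)^{\alpha/(\alpha-1)}}{\alpha}(2\sigma^2\log(2/\delta))^{\alpha/(\alpha-1)} = \frac{1}{\alpha}(2\sigma^2(\alpha-1)\log(2/\delta))^{\alpha/(\alpha-1)}$, dividing by $m$, and taking square roots.

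The main obstacle is essentially bookkeeping: verifying that the i.i.d.\ average of sub-Gaussian losses inherits the expected $\sigma^2/m$ variance proxy, and tracking the constants through the change-of-measure inequality so that the expressions match the ones stated. No delicate variational step is needed, since the entire proof reuses the template of Proposition \ref{prop_boundloss} with the Hoeffding bound replaced by the sub-Gaussian Chernoff bound.
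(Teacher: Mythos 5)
Your proposal is correct and follows essentially the same route as the paper's own proof: the same reduction via Jensen's inequality and $\Delta(q,p)=(q-p)^2$, the same sub-Gaussian Chernoff tail bound giving $\phi_D(h) \leqdel \frac{2t\sigma^2}{m}\log(\frac{2}{\delta})$, the same sup-swap moment transfer from Equation \eqref{momentbound}, and the same application of Lemma \ref{lemma_mulalpha} and Lemma \ref{lemma_cmi_unconstrainedalpha} with $t=m$ for the two claims respectively. The only content in the paper's proof you omit is an alternative (Markov-based) moment bound in the style of \cite{Alquier2018}, which the paper includes only to show it is looser than the one you use, so nothing essential is missing.
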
 
\begin{proof}
	Suppose the  convex function $\Delta: \R\times\R \rightarrow \R$ is defined as in Proposition \ref{prop_boundloss}. Employing Chernoff's bound, the tail bound probability for sub-Gaussian random variables \cite{wainwright_2019} is given as follows
	\begin{equation}
	\label{tailboundprob_subgaussian}
	\begin{split}
	Pr(|\Bar{Z}-\mu|\geq \epsilon) \leq 2e^{-\frac{m\epsilon^2}{2\sigma^2}}
	\end{split}
	\end{equation}
	Setting $\overline{Z}=R_S(h)$, $\mu=R_D(h)$ and $\delta = 2e^{-\frac{m\epsilon^2}{2\sigma^2}}$ in the tail bound in Equation \eqref{tailboundprob_subgaussian}, for any $h \in \mathcal{H}$, we have
	\begin{equation*}
	\begin{split}
	&\mathop{Pr}_{(x,y) \sim D}(\phi_D(h)\geq\epsilon) \\ 
	&=\mathop{Pr}_{(x,y) \sim D}\bigg((R_S(h)-R_D(h))^2\geq\frac{\epsilon}{t}\bigg) \\
	&=\mathop{Pr}_{(x,y) \sim D}\bigg(\bigg|\frac{1}{m} \sum_{i=1}^{m} \ell(h(x_i), y_i)-\mathop{\E}_{(x,y) \sim D} \ell(h(x), y)\bigg|\geq\sqrt{\frac{\epsilon}{t}}\bigg) \\
	& \leq 2e^{-\frac{m\epsilon}{2\sigma^2t}}
	\end{split}
	\end{equation*}
	
	Setting $\delta = 2e^{-\frac{m\epsilon}{2\sigma^2t}}$, we have
	\begin{equation*}
	\phi_D(h) \leqdel \frac{2t\sigma^2}{m}\log(\frac{2}{\delta})
	\end{equation*}
	where $\phi_D(h)$ is defined as in Equation \eqref{jensenrisk}. By Equation \eqref{momentbound}, we have 
	\begin{equation}
	\label{momentbound_subgauss_1}
	\mathop{\E}_{h\sim P}[(\phi_D(h))^{\frac{\alpha}{\alpha-1}}] \leqdel \bigg(\frac{2t\sigma^2}{m}\log(\frac{2}{\delta})\bigg)^{\frac{\alpha}{\alpha-1}}
	\end{equation}
	On the other hand, we may upper-bound $\mathop{\E}_{h\sim P}[(\phi_D(h))^{\frac{\alpha}{\alpha-1}}]$ in the following way (as in Proposition 6. in \cite{Alquier2018}).
	\begin{equation*}
	\begin{split}
	\mathop{\E}_{h\sim P}[(\phi_D(h))^{\frac{\alpha}{\alpha-1}}] 
	&\leqdel  \frac{t^{\frac{\alpha}{\alpha-1}}}{\delta}\mathop{\E}_{(x,y)\sim D}\mathop{\E}_{h\sim P}[(R_S(h)-R_D(h))^{\frac{2\alpha}{\alpha-1}}]\\
	& = \frac{t^{\frac{\alpha}{\alpha-1}}}{\delta}\mathop{\E}_{h\sim P}\mathop{\E}_{(x,y)\sim D}[(R_S(h)-R_D(h))^{\frac{2\alpha}{\alpha-1}}]\\
	\end{split}
	\end{equation*}
	Let $U=R_S(h)-R_D(h)$ and $q=\frac{\alpha}{\alpha-1}$. Then, $U$ is a sub-Gaussian random variable from our assumption.
	
	\begin{equation*}
	\begin{split}
	&\mathop{\E}_{(x,y)\sim D}[(R_S(h)-R_D(h))^{\frac{2\alpha}{\alpha-1}}] = \mathop{\E}_{(x,y)\sim D}[U^{2q}]\\
	&= \int_0^{\infty}Pr\{|U|^{2q}>u\}du = 2q\int_0^{\infty}u^{2q-1}Pr\{|U|>u\}du\\
	&\leq 4q\int_0^{\infty}u^{2q-1}e^{-\frac{mu^2}{2\sigma^2}}du
	\end{split}
	\end{equation*}
	
	By setting $u=\sqrt{t}$, the previous inequality becomes
	\begin{equation*}
	\begin{split}
	&\mathop{\E}_{(x,y)\sim D}[(R_S(h)-R_D(h))^{\frac{2\alpha}{\alpha-1}}]  \leq 2q\int_0^{\infty}t^{q-1}e^{-\frac{mt}{2\sigma^2}}dt\\
	&=2q\Gamma(q)\bigg(\frac{m}{2\sigma^2}\bigg)^{-q} =2\frac{\alpha}{\alpha-1}\Gamma\bigg(\frac{\alpha}{\alpha-1}\bigg)\bigg(\frac{2\sigma^2}{m}\bigg)^{\frac{\alpha}{\alpha-1}}\\
	\end{split}
	\end{equation*}
	Therefore, we have
	\begin{equation}
	\label{momentbound_subgauss_2}
	\begin{split}
	\mathop{\E}_{h\sim P}[(\phi_D(h))^{\frac{\alpha}{\alpha-1}}]   \leqdel \frac{2\alpha t^{\frac{\alpha}{\alpha-1}}}{\delta(\alpha-1)}\Gamma\bigg(\frac{\alpha}{\alpha-1}\bigg)\bigg(\frac{2\sigma^2}{m}\bigg)^{\frac{\alpha}{\alpha-1}}
	\end{split}
	\end{equation}
	Although one has choices to use either Equation \eqref{momentbound_subgauss_1} or \eqref{momentbound_subgauss_2}, we can easily see that Equation \eqref{momentbound_subgauss_1} is  always tighter than  \eqref{momentbound_subgauss_2}. 
	Putting everything together with Lemma \ref{lemma_cmi_unconstrainedalpha}, we prove our claim.
	\qedhere
\end{proof}

The following corollary is an immediate consequnece of Proposition \ref{prop_pac_subgauss}  for $\alpha=2$.

\begin{corollary}[The PAC-Bayesian bounds with $\chi^2$-divergence for sub-Gaussian loss function]
	\label{cor_pac_subgauss_chi2}
	Let $P$ be any prior distribution over an infinite hypothesis space $\mathcal{H}$. For a given posterior distribution $Q$ over an infinite hypothesis space $\mathcal{H}$, let $R_D(G_Q)$ and $R_S(G_Q)$ be the Gibbs risk and the empirical Gibbs risk as in Equation \eqref{gibbsrisk} and \eqref{gibbsempiricalrisk} respectively. 
	For the sample size $m>0$, with probability at least $1-\delta$, simultaneously for all posterior distributions $Q$, we have	
	\begin{equation}
	\label{cor_pac_subgauss_1}
	\begin{split}
	R_D(G_Q) &  \leq  R_S(G_Q) + \sqrt{\frac{2\sigma^2}{m}\log(\frac{2}{\delta})\sqrt{\chi^2(Q\|P)+1}}
	\end{split}
	\end{equation}	
	\begin{equation}
	\label{eq_pac_subgauss_chi2_2}
	\begin{split}
	R_D(G_Q)  \leq  R_S(G_Q) + \sqrt{\frac{1}{2m}\bigg(\chi^2(Q\|P)+1+\bigg(2\sigma^2\log(\frac{2}{\delta})\bigg)^2\bigg)}
	\end{split}
	\end{equation}
\end{corollary}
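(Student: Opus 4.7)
The plan is to derive this corollary as a direct specialization of Proposition \ref{prop_pac_subgauss} at $\alpha = 2$, relying on the fact that the Pearson $\chi^2$-divergence coincides (up to a constant factor) with the $\alpha$-divergence at $\alpha = 2$. Specifically, from Table \ref{tab:div} we have
\begin{equation*}
D_2(Q\|P) \;=\; \tfrac{1}{2}\int_{\mathcal{H}}\bigl(\bigl|\tfrac{dQ}{dP}\bigr|^{2} - 1\bigr)\,dP \;=\; \tfrac{1}{2}\chi^2(Q\|P),
\end{equation*}
so that $\alpha(\alpha-1)D_\alpha(Q\|P)\big|_{\alpha=2} = 2\,D_2(Q\|P) = \chi^2(Q\|P)$. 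Once this identity is in hand, both claims follow by substitution.

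For the first bound \eqref{cor_pac_subgauss_1}, I would set $\alpha = 2$ in the first inequality of Proposition \ref{prop_pac_subgauss}. Then $\alpha(\alpha-1)D_\alpha(Q\|P)+1 = \chi^2(Q\|P)+1$ and the exponent $1/\alpha = 1/2$, so the factor becomes $\sqrt{\chi^2(Q\|P)+1}$, yielding \eqref{cor_pac_subgauss_1} directly. For the second bound \eqref{eq_pac_subgauss_chi2_2}, I would substitute $\alpha = 2$ in the second inequality of Proposition \ref{prop_pac_subgauss} and simplify term by term: $D_\alpha(Q\|P) + \tfrac{1}{\alpha(\alpha-1)}$ becomes $\tfrac{1}{2}(\chi^2(Q\|P)+1)$; the exponent $\tfrac{\alpha}{\alpha-1}$ becomes $2$; the factor $\tfrac{1}{m\alpha}$ becomes $\tfrac{1}{2m}$; and $(\alpha-1) = 1$ drops out inside the square. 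Collecting these simplifications into a common denominator of $2m$ produces the claimed expression $\sqrt{\tfrac{1}{2m}\bigl(\chi^2(Q\|P)+1+(2\sigma^2\log(2/\delta))^2\bigr)}$.

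There is no substantive obstacle here: the corollary is genuinely a specialization and the only work is algebraic bookkeeping, namely verifying the identity $\alpha(\alpha-1)D_\alpha(Q\|P)|_{\alpha=2} = \chi^2(Q\|P)$ and then collecting coefficients after substitution. The probabilistic content and the PAC-Bayes argument are already encapsulated in Proposition \ref{prop_pac_subgauss}, which itself combines the multiplicative and additive $\alpha$-divergence change of measure inequalities (Lemmas \ref{lemma_mulalpha} and \ref{lemma_cmi_unconstrainedalpha}) with the sub-Gaussian tail bound and the moment estimate on $\mathbb{E}_{h \sim P}[(\phi_D(h))^{\alpha/(\alpha-1)}]$, so no probabilistic step needs to be reproven.
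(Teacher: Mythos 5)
Your proposal is correct and matches the paper's treatment exactly: the paper likewise presents this corollary as an immediate consequence of Proposition \ref{prop_pac_subgauss} at $\alpha=2$, using the identity $2D_2(Q\|P)=\chi^2(Q\|P)$, and your algebraic bookkeeping for both bounds checks out. You have in fact spelled out the substitution in more detail than the paper does.
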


\cite{Alquier2018} proved PAC-Bayes bound for sub-Gaussian loss function with $\chi^2$-divergence. It is noteworthy that $\Delta$ may be any convex function and a different choice of $\Delta$ leads us to various bounds. For instance, choosing $\Delta(q,p)=|q-p|$ for Lemma \ref{lemma_mulalpha} results in 
\begin{equation*}
\begin{split}
R_D(G_Q) &  \leq  R_S(G_Q) + \sqrt{\frac{2\sigma^2}{m}\log(\frac{2}{\delta})(\chi^2(Q\|P)+1)}
\end{split}
\end{equation*}	
which is quite similar to the bounds in Proposition 6 in \cite{Alquier2018} and looser than Equation \eqref{cor_pac_subgauss_1}. We obtained a tighter bound due to our choice of $\Delta(q, p)=(q-p)^2$.

	\subsection{Sub-Exponential Loss Function}
	We now turn to a more general class where $\ell(h(x), y)$ is sub-exponential for any $h \in \mathcal{H}$. First, we define sub-exponentiality \cite{wainwright_2019}.
	\begin{definition}
		\label{def_subexp}
		A random variable $Z$ is said to be sub-exponential with the expectation $\E[Z]=\mu$ and parameters $\sigma^2$ and $\beta>0$, if for any $\lambda \in \R$,
		\begin{equation*}
		\begin{split}
		\E[e^{\lambda(Z-\mu)}] \leq e^{\frac{\lambda^2 \sigma^2}{2}}, \forall:|\lambda|<\frac{1}{\beta}
		\end{split}
		\end{equation*}
	\end{definition}
	Next, we provide our PAC-Bayesian bounds.
	\begin{proposition} [The PAC-Bayesian bounds for sub-exponential loss function]
		\label{prop_pac_subexp}
		Let $P$ be a fixed prior distribution over an infinite hypothesis space $\mathcal{H}$. For a given posterior distribution $Q$ over an infinite hypothesis space $\mathcal{H}$, let $R_D(G_Q)$ and $R_S(G_Q)$ be the Gibbs risk and the empirical Gibbs risk as in Equation \eqref{gibbsrisk} and \eqref{gibbsempiricalrisk} respectively. 
		For the sample size $m$ and $\alpha > 1$, with probability at least $1-\delta$, simultaneously for all posterior distributions $Q$, we have
		\begin{equation*}
		\begin{split}
		R_D(G_Q)  \leq R_S(G_Q)+\sqrt{\mathcal{K}^{1}_{\delta}\big(\alpha(\alpha-1)D_{\alpha}(Q\|P)+1\big)^{\frac{1}{\alpha}}}
		\end{split}
		\end{equation*}
		\begin{equation*}
		\begin{split}
		R_D(G_Q)  \leq R_S(G_Q)+\sqrt{\frac{1}{t}\bigg(D_{\alpha}(Q\|P)+\frac{1}{\alpha(\alpha-1)}\bigg)+\mathcal{K}^2_{\alpha, \delta}}
		\end{split}
		\end{equation*}
		where 
		\begin{equation*}
		\begin{split}
		\mathcal{K}^{1}_{\delta}=
		\begin{cases} 
		\frac{2\sigma^2}{m}\log(\frac{2}{\delta}) ,  \quad \frac{2\beta^2\log(\frac{2}{\delta})}{\sigma^2} \leq m\\ 
		(\frac{2\beta}{m}\log \frac{2}{\delta})^2 ,  \quad 0<m<\frac{2\beta^2\log(\frac{2}{\delta})}{\sigma^2},
		\end{cases} 
		\mathcal{K}^{2}_{\alpha, \delta}=\frac{m^{\frac{1}{\alpha-1}}(\alpha-1)^{\frac{\alpha}{\alpha-1}}}{\alpha}(\mathcal{K}^{1}_{\delta})^{\frac{\alpha}{\alpha-1}}
		\end{split}
		\end{equation*}
	\end{proposition}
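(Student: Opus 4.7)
The plan is to mirror the proof of Proposition \ref{prop_pac_subgauss} for sub-Gaussian losses, the only substantive change being the replacement of the sub-Gaussian Chernoff tail by the two-regime Bernstein-style tail bound for sub-exponential random variables. As in the sub-Gaussian proof, I would set $\Delta(q,p)=(q-p)^2$ and $\phi_D(h)=t\,(R_S(h)-R_D(h))^2$, so that Jensen's inequality gives
\begin{equation*}
t\,(R_D(G_Q)-R_S(G_Q))^2 \;\leq\; \mathop{\E}_{h\sim Q}[\phi_D(h)],
\end{equation*}
placing us in exactly the setting needed to invoke Lemma \ref{lemma_mulalpha} and Lemma \ref{lemma_cmi_unconstrainedalpha}, once we have control of the moments of $\phi_D(h)$ under $P$.

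The key step is the tail bound on $\phi_D(h)$. By Definition \ref{def_subexp}, the average loss $R_S(h)$ is the mean of $m$ i.i.d.\ sub-exponential random variables with parameters $(\sigma^2,\beta)$, so the standard Bernstein tail bound yields
\begin{equation*}
\mathop{\Pr}_{(x,y)\sim D}\!\bigl(|R_S(h)-R_D(h)|\geq \eta\bigr)\;\leq\;
\begin{cases} 2\,e^{-m\eta^2/(2\sigma^2)}, & 0\leq \eta \leq \sigma^2/\beta,\\ 2\,e^{-m\eta/(2\beta)}, & \eta > \sigma^2/\beta.\end{cases}
\end{equation*}
Writing this event as $\phi_D(h)\geq t\eta^2$, inverting in each regime with the target failure probability $\delta$, and comparing the threshold $\eta$ with $\sigma^2/\beta$ produces the case split $m\geq 2\beta^2\log(2/\delta)/\sigma^2$ versus $m<2\beta^2\log(2/\delta)/\sigma^2$, which is precisely the definition of $\mathcal{K}^{1}_\delta$. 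Hence $\phi_D(h)\leqdel t\,\mathcal{K}^{1}_\delta$, and after raising to the power $\alpha/(\alpha-1)$, the same supremum-expectation swap used in Equation \eqref{momentbound} of Proposition \ref{prop_boundloss} gives
\begin{equation*}
\mathop{\E}_{h\sim P}\bigl[(\phi_D(h))^{\frac{\alpha}{\alpha-1}}\bigr] \;\leqdel\; \bigl(t\,\mathcal{K}^{1}_\delta\bigr)^{\frac{\alpha}{\alpha-1}}.
\end{equation*}

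Plugging this moment bound into Lemma \ref{lemma_mulalpha} and dividing by $t$ yields the first (multiplicative) inequality of the proposition, and plugging it into Lemma \ref{lemma_cmi_unconstrainedalpha}, dividing by $t$, and finally specializing $t=m$ yields the additive bound, where the term $\frac{(\alpha-1)^{\alpha/(\alpha-1)}}{\alpha}\,t^{1/(\alpha-1)}(\mathcal{K}^{1}_\delta)^{\alpha/(\alpha-1)}$ at $t=m$ collapses exactly to $\mathcal{K}^{2}_{\alpha,\delta}$.

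The main obstacle I anticipate is bookkeeping in the case split: the expression for $\mathcal{K}^{1}_\delta$ conflates the two Bernstein regimes into a single symbol, so one has to verify that the inversion $\epsilon=t\eta^2$ respects the boundary $\eta=\sigma^2/\beta$ and that the threshold on $m$ correctly corresponds to which Bernstein tail is active; the rest is routine and parallels Propositions \ref{prop_boundloss} and \ref{prop_pac_subgauss} verbatim.
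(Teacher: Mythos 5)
Your proposal matches the paper's proof essentially verbatim: the paper likewise sets $\Delta(q,p)=(q-p)^2$, applies the two-regime sub-exponential tail bound to get $\phi_D(h)\leqdel t\,\mathcal{K}^{1}_{\delta}$ with exactly the case split on $m$ you derive, and then invokes the same moment-bound/Jensen machinery from Propositions \ref{prop_boundloss} and \ref{prop_pac_subgauss} together with Lemmas \ref{lemma_mulalpha} and \ref{lemma_cmi_unconstrainedalpha}. Your verification that the inversion of the Bernstein threshold yields the stated condition on $m$, and that the additive term collapses to $\mathcal{K}^{2}_{\alpha,\delta}$ at $t=m$, is correct and is in fact more explicit than the paper, which defers these steps to "the same procedure as in" the earlier propositions.
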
 
	\begin{proof}
		Suppose the convex function $\Delta: \R\times\R \rightarrow \R$ is defined as in Proposition \ref{prop_boundloss}.  For any random variables satisfying Definition \ref{def_subexp}, we have the following concentration inequality \cite{wainwright_2019}.
		\begin{equation}
		\label{tailboundprob_subexp}
		\begin{split}
		Pr(|\overline{Z}-\mu|\geq \epsilon) \leq \begin{cases} 
		2e^{-\frac{m\epsilon^2}{2\sigma^2}},  \quad 0<\epsilon\leq \frac{\sigma^2}{\beta}\\
		2e^{-\frac{m\epsilon}{2\beta}}, \quad \frac{\sigma^2}{\beta}<\epsilon
		\end{cases}
		\end{split}
		\end{equation}
		Following the proof of Proposition \ref{prop_pac_subgauss}, for $0<\sqrt{\frac{\epsilon}{t}} \leq \frac{\sigma^2}{\beta}$, we have 
		\begin{equation*}
		\phi_D(h) \leqdel \frac{2t\sigma^2}{m}\log(\frac{2}{\delta})
		\end{equation*}
		
		For $\frac{\sigma^2}{\beta}<\sqrt{\frac{\epsilon}{t}}$, we have
		\begin{equation*}
		\begin{split}
		&\mathop{Pr}_{(x,y) \sim D}(\phi_D(h)\geq\epsilon) \\ 
		&=\mathop{Pr}_{(x,y) \sim D}\bigg((R_S(h)-R_D(h))^2\geq\frac{\epsilon}{t}\bigg) \\
		&=\mathop{Pr}_{(x,y) \sim D}\bigg(\bigg|\frac{1}{m} \sum_{i=1}^{m} \ell(h(x_i), y_i)-\mathop{\E}_{(x,y) \sim D} \ell(h(x), y)\bigg|\geq\sqrt{\frac{\epsilon}{t}}\bigg) \\
		& \leq 2e^{-\frac{m}{2\beta}\sqrt{\frac{\epsilon}{t}}}
		\end{split}
		\end{equation*}
		Setting $\delta = 2e^{-\frac{m}{2\beta}\sqrt{\frac{\epsilon}{t}}}$, we have, for $\frac{\sigma^2}{\beta}<\sqrt{\frac{\epsilon}{t}}$,
		\begin{equation*}
		\phi_D(h) \leqdel t\bigg(\frac{2\beta}{m}\log \frac{2}{\delta}\bigg)^2
		\end{equation*}
		Thus,
		\begin{equation*}
		\begin{split}
		\phi_D(h) \leqdel 
		\begin{cases}
		\frac{2t\sigma^2}{m}\log(\frac{2}{\delta}) ,  \quad \frac{2\beta^2\log(\frac{2}{\delta})}{\sigma^2} \leq m\\
		t(\frac{2\beta}{m}\log \frac{2}{\delta})^2,  \quad 0<m<\frac{2\beta^2\log(\frac{2}{\delta})}{\sigma^2}
		\end{cases}
		\end{split}
		\end{equation*}
		where $\phi_D(h)$ is defined as in Equation \eqref{jensenrisk}. 
		Now, we have the upper bound for $\phi_D(h)$ so we can apply the same procedure as in Proposition \ref{prop_boundloss} and \ref{prop_pac_subexp}.
		\qedhere
	\end{proof}
	
	Note that, for $\frac{2\beta^2\log(\frac{2}{\delta})}{\sigma^2} \leq m$, $\phi_D(h)$ behaves like sub-Gaussian. However, when the sample size is small, a tighter bound (i.e., $t(\frac{2\beta}{m}\log \frac{2}{\delta})^2$) can be obtained. This shows the advantage  of  assuming  sub-exponentiality  over sub-Gaussianity. 	By setting $\alpha=2$ in Proposition \ref{prop_pac_subexp}, we have the following corollary.
	
	\begin{corollary}[The PAC-Bayesian bounds with $\chi^2$-divergence for sub-exponential loss function]
		\label{cor_pac_sub_exp_chi}
		Let $P$ be any prior distribution over an infinite hypothesis space $\mathcal{H}$. For a given posterior distribution $Q$ over an infinite hypothesis space $\mathcal{H}$, let $R_D(G_Q)$ and $R_S(G_Q)$ be the Gibbs risk and the empirical Gibbs risk as in Equation \eqref{gibbsrisk} and \eqref{gibbsempiricalrisk} respectively. 
		For the sample size $m>0$, with probability at least $1-\delta$, simultaneously for all posterior distributions $Q$, we have	
		\begin{equation*}
		\begin{split}
		R_D(G_Q) &  \leq  R_S(G_Q) + \sqrt{\mathcal{K}^{1}_{\delta}\sqrt{\chi^2(Q\|P)+1}}
		\end{split}
		\end{equation*}	
		\begin{equation*}
		\begin{split}
		R_D(G_Q) &  \leq  R_S(G_Q) + \sqrt{\frac{1}{2m}\big(\chi^2(Q\|P)+1+(m\mathcal{K}^{1}_{\delta})^2\big)}
		\end{split}
		\end{equation*}
		where 
		\begin{equation*}
		\begin{split}
		&\mathcal{K}^{1}_{\delta}=
		\begin{cases} 
		\frac{2\sigma^2}{m}\log(\frac{2}{\delta}) ,  \quad \frac{2\beta^2\log(\frac{2}{\delta})}{\sigma^2} \leq m\\ 
		(\frac{2\beta}{m}\log \frac{2}{\delta})^2 ,  \quad 0<m<\frac{2\beta^2\log(\frac{2}{\delta})}{\sigma^2}
		\end{cases} 
		\end{split}
		\end{equation*}
	\end{corollary}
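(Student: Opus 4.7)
The corollary is an immediate specialization of Proposition \ref{prop_pac_subexp} to $\alpha = 2$, combined with the identity $\chi^2(Q\|P) = 2\,D_2(Q\|P)$ that follows from the definitions in Table \ref{tab:div} (indeed, $D_\alpha(Q\|P) = \frac{1}{\alpha(\alpha-1)}\int(|\tfrac{dQ}{dP}|^{\alpha}-1)dP$ yields $D_2 = \tfrac{1}{2}\chi^2$). The plan is simply to perform this substitution in both inequalities of Proposition \ref{prop_pac_subexp} and simplify. No new probabilistic argument is required, since the sub-exponential concentration and the $\alpha$-divergence change-of-measure machinery were already absorbed into the parent proposition.

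For the first bound, I will substitute $\alpha=2$ into $(\alpha(\alpha-1)D_\alpha(Q\|P)+1)^{1/\alpha}$. This gives $(2\,D_2(Q\|P)+1)^{1/2} = \sqrt{\chi^2(Q\|P)+1}$, and the prefactor $\mathcal{K}^1_\delta$ is unchanged since its definition does not depend on $\alpha$. This directly yields the first displayed inequality of the corollary.

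For the second bound, setting $\alpha=2$ gives $\frac{1}{\alpha(\alpha-1)} = \frac{1}{2}$ and $D_\alpha(Q\|P) + \frac{1}{\alpha(\alpha-1)} = \frac{1}{2}(\chi^2(Q\|P)+1)$. Choosing the free parameter $t = m$ as in the proof of Proposition \ref{prop_pac_subexp}, the first summand inside the square root becomes $\frac{1}{2m}(\chi^2(Q\|P)+1)$. For the second summand, $\mathcal{K}^2_{\alpha,\delta} = \frac{m^{1/(\alpha-1)}(\alpha-1)^{\alpha/(\alpha-1)}}{\alpha}(\mathcal{K}^1_\delta)^{\alpha/(\alpha-1)}$ specializes at $\alpha=2$ to $\frac{m}{2}(\mathcal{K}^1_\delta)^{2}$, which rewrites as $\frac{(m\mathcal{K}^1_\delta)^2}{2m}$. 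Adding the two contributions under the square root produces $\frac{1}{2m}\bigl(\chi^2(Q\|P)+1+(m\mathcal{K}^1_\delta)^2\bigr)$, which is the second displayed inequality.

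Since the entire argument reduces to plugging in $\alpha=2$ and tracking the relation between $D_2$ and $\chi^2$, there is no real obstacle; the only thing to be careful about is keeping the piecewise definition of $\mathcal{K}^1_\delta$ intact (it carries over verbatim from Proposition \ref{prop_pac_subexp}), and confirming that the choice $t=m$ used in the parent proposition is the one that yields the clean form stated in the corollary.
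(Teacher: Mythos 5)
Your proposal is correct and matches the paper's own (one-line) justification: the corollary is obtained exactly by setting $\alpha=2$ in Proposition \ref{prop_pac_subexp}, using $\chi^2(Q\|P)=2D_2(Q\|P)$ and the choice $t=m$, and your algebra for both displayed bounds checks out. No gaps.
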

	
	To the best of our knowledge, our bounds for sub-exponential losses are entirely novel.
	
\section{Log-concave distribution}
	\label{appendix_logconcave}
	We say that a distribution $P$ with a density $p$ (with respect to the Lebesgue measure) is a strongly log-concave distribution if the function $\log p$ is strongly concave. Equivalently stated, this condition means that the density can be expressed as $p(x)=\exp(-\psi(x))$, where the function $\psi:\R^d \rightarrow \R$ is strongly convex, meaning that there is some $\gamma>0$ such that
	\begin{equation*}
	\lambda \psi(x) + (1-\lambda)\psi(y) -\psi(\lambda x +(1-\lambda)y) \geq \frac{\gamma}{2} \lambda(1-\lambda) \|x-y\|_2^2
	\end{equation*}
	for all $\lambda \in [0,1]$, and $x,y \in \R^d$.

\end{document}